\def\alphabet{abcdefghijklmnopqrstuvwxyzABCDEFGHIJKLMNOPQRST123456789}
\renewcommand{\vec}[1]{
	\IfSubStr{\alphabet}{#1}{
		\ensuremath{\mbf{\MakeLowercase{#1}}}
	}{
		\ensuremath{\bm{\MakeLowercase{#1}}}
	}
}
\def\R{\mathbb R}
\def\E{\mathbb E}
\def\N{\mathbb N}
\def\sign{\mathrm{sign}}
\renewcommand\epsilon{\varepsilon}
\newcommand{\vertiii}[1]{{\left\vert\kern-0.25ex\left\vert\kern-0.25ex\left\vert #1 
    \right\vert\kern-0.25ex\right\vert\kern-0.25ex\right\vert}}
\def\1{\vec{1}}
\def\vq{{\vec{q}}}
\def\vv{{\vec{v}}}
\def\vw{{\vec{w}}}
\def\vx{{\vec{x}}}
\def\vy{{\vec{y}}}
\def\vz{{\vec{z}}}
\def\mA{{\bm{A}}}
\def\mI{{\bm{I}}}
\def\mM{{\bm{M}}}
\def\mW{{\bm{W}}}
\def\mX{{\bm{X}}}
\def\mY{{\bm{Y}}}
\def\mPhi{{\bm{\Phi}}}
\def\mOmega{{\bm{\Omega}}}
\newcommand{\mbf}{\mathbf}
\newcommand{\mcl}{\mathcal}
\def\R{\mathbb{R}}
\def\E{\mathbb{E}}
\def\N{\mathbb{N}}
\def\d{\mathrm{d}}
\newcommand*{\norm}[1]{\left\|#1\right\|}
\newcommand*{\paran}[1]{\left(#1\right)}
\newcommand*{\card}[1]{\left|#1\right|}
\newcommand*{\prob}[1]{\mathbb{P}}
\def\defeq{:=}
\def\d{\mathrm{d}} 
\def\defeq{:=}
\def\d{\mathrm{d}}
\def\trainS{\mathcal{S}} 
\newcommand{\hypS}{\mathcal{H}} 
\newcommand{\hypspacelong}{\mcl{H}^L}
\def\datadist{\mathcal D}
\def\dimx{N} 
\def\dimy{n} 
\def\dimS{m} 
\def\dimL{L} 
\def\Bin{B_{\text{in}}}
\def\Bout{B_{\text{out} }}
\def\Lemp{\hat{\mathcal{L}}} 
\def\Ltrue{{\mathcal{L}}} 
\def\GE{\mathrm{GE}} 
\newcommand{\radem}{\mathcal{R}} 
\def\cover{\mathcal{N}}
\acrodef{ISTA}{Iterative soft thresholding algorithm}
\acrodef{LISTA}{Learned Iterative soft thresholding algorithm}
\renewenvironment{proof}[1][Proof]{\noindent\textit{#1. } }{\hfill$\square$}
\newtheoremstyle{theorem}{6pt}{6pt}{\rm}{}{\sffamily}{ }{ }{}
\theoremstyle{theorem}
\newtheorem{theorem}{\sc Theorem}[section]
\newtheoremstyle{lemma}{6pt}{6pt}{\rm}{}{\sffamily}{ }{ }{}
\theoremstyle{lemma}
\newtheorem{lemma}{\sc Lemma}[section]
\newtheoremstyle{example}{6pt}{6pt}{\rm}{}{\sffamily}{ }{ }{}
\theoremstyle{example}
\newtheoremstyle{corollary}{6pt}{6pt}{\rm}{}{\sffamily}{ }{ }{}
\theoremstyle{corollary}
\newtheorem{corollary}{\sc Corollary}[section]
\newtheoremstyle{definition}{6pt}{6pt}{\rm}{}{\sffamily}{ }{ }{}
\theoremstyle{definition}
\newtheoremstyle{remark}{6pt}{6pt}{\rm}{}{\sffamily}{ }{ }{}
\theoremstyle{remark}
\newtheorem{remark}{\sc Remark}[section]
\newtheoremstyle{approximation}{6pt}{6pt}{\rm}{}{\sffamily}{ }{ }{}
\theoremstyle{approximation}
\newtheoremstyle{scheme}{6pt}{6pt}{\rm}{}{\sffamily}{ }{ }{}
\theoremstyle{scheme}
\title{Generalization Error Bounds for Iterative Recovery Algorithms Unfolded as Neural Networks}
\author[1]{Ekkehard Schnoor \thanks{schnoor@mathc.rwth-aachen.de}}
\author[2]{Arash Behboodi \thanks{arash.behboodi@ti.rwth-aachen.de}}
\author[1]{Holger Rauhut \thanks{rauhut@mathc.rwth-aachen.de}}
\affil[1]{Chair for Mathematics of Information Processing, RWTH Aachen University}
\affil[2]{Institute for Theoretical Information Technology, RWTH Aachen University}
\begin{document}

\maketitle

\begin{abstract}
{Motivated by the learned iterative soft thresholding algorithm (LISTA), we introduce a general class of neural networks suitable for sparse reconstruction from few linear measurements. 
By allowing a wide range of degrees of weight-sharing between the layers, we enable a unified analysis for very different neural network types, ranging from recurrent ones to networks more similar to standard feedforward neural networks. Based on training samples, via empirical risk minimization we aim at learning the optimal network parameters and thereby the optimal network that reconstructs signals from their low-dimensional linear measurements. We derive generalization bounds by analyzing the Rademacher complexity of hypothesis classes consisting of such deep networks, that also take into account the thresholding parameters. We obtain estimates of the sample complexity that essentially depend only linearly on the 
number of parameters and on the depth.
We apply our main result to obtain specific generalization bounds for several practical examples, including different algorithms for (implicit) dictionary learning, and convolutional neural networks.}
\end{abstract}

\section{Introduction}
Deep neural networks with their strength in learning useful representations and features from data have shown remarkable performance in many tasks in computer vision and natural language processing among others. These multi-layer architectures can be seen as stack of features organized in a layer-wise way.  
Iterative optimization algorithms such as the iterative soft thresholding algorithm (ISTA) \cite{daubechies2004iterative} can be unfolded as layers of a neural network with skip connections and recurrent structure. 
Based on this observation we can optimize the parameters of the iterative algorithms as a multi-layer network to achieve better convergence and performance. For example, the learned
iterative soft thresholding algorithm (ISTA) method \cite{gregor2010learning} optimizes an unfolded iterative soft thresholding algorithm (ISTA) by replacing all linear operations with learnable linear layers and shows promising performance. In such a way, the reconstruction can adapt
potentially much better to the class of (training) signals specific to the application at hand than a generic optimization approach.   


In this paper, we introduce and analyze a broad class
of neural networks, which we call deep iterative recovery networks, that can be used for representation learning and sparse recovery tasks from the viewpoint
of statistical learning theory and provide bounds
for the generalization error for corresponding
hypothesis classes of such networks.


As a practical application, one may think of reconstructing images from measurements taken by a medical imaging device. Instead of only trying to reconstruct the image, we would like to implicitly learn also a meaningful representation system which is adapted to the image class of interest, and leads to good generalization (e.g., when taking measurements of new patients). This  approach is in general useful for solving inverse problems in a data-driven way \cite{arridge2019solving, gottschling2020troublesome}.
As such it is closely related to dictionary learning methods
\cite{gribonval2015sample,vainsencher2011sample} such as K-SVD \cite{Aharon-2006,sc14}, which however do not take into account the reconstruction method but only aim at learning a sparse representation system for the type of data at hand.


While so far generalization of neural networks has been studied mostly in the context of classification using feed-forward neural networks, see e.g.\ \cite{arora2018stronger,bartlett_spectrally-normalized_2017,jiang2019fantastic,neyshabur2017pac,neyshabur2017exploring}, our case studied falls into the class of regression problems which has received far less attention so far from the perspective of generalization. 
Our main contribution is a unified 
analysis of the generalization error for a large class of neural networks, including both feedforward and recurrent neural networks. Note that generalization bounds for
recurrent networks are mostly neglected in the literature, and while they
are difficult to train in general \cite{pascanu2013difficulty}, such problems do not occur in our setup. 

Our analysis considers various degrees of weight-sharing between layers. In this way we obtain a 
unified treatment both for networks with a moderate number and with a large number of parameters.
Our generalization bounds 
are shown by estimating the Rademacher complexity of hypothesis classes consisting of such deep networks via a generalization \cite{maurer_vector-contraction_2016} of Talagrand's contraction principle \cite{ledoux_probability_2011}
and Dudley's integral, and in particular via the covering numbers involved. For the latter, we derive bounds on
the Lipschitz constants of the networks in terms of the
network parameters.
We believe that the techniques presented are of independent interest far beyond the particular problem studied here, e.g., for a theoretical investigation of related iterative schemes, general regression problems using neural networks, and in particular autoencoders and recurrent neural networks.

The paper is structured as follows.
In Section~\ref{sec:DSTmodel} we introduce our deep iterative recovery networks, 
and formulate the corresponding machine learning problem. 
Our main result that bounds the generalization error for the setup introduced before
is stated in Section~\ref{sec:main_result}, where we also give various examples for interesting special cases with
the resulting generalization bound. These examples
include weight-sharing, convolutional networks and
special cases with certain fixed parameters.
In Section~\ref{sec:generalization} we describe our approach
to solve problem at hand by bounding the Rademacher complexity using Dudley's integral. 
Section~\ref{sec:proof} provides the detailed proof of our main result.
Finally, in Section~\ref{sec:numerical} we present the results of our numerical experiments and compare it with our theoretical findings in the section before.

\section{Unfolded Iterative Algorithms}
\label{sec:DSTmodel}

\subsection{Motivation: Learning a dictionary implicitly by training a decoder}
\label{subsection:motivation_example}

As a motivating example, let us first recall our previous work
\cite{behboodi2020generalization}, which we are going to vastly generalize in this paper.
There, we consider a class of signals $\vx \in \R^\dimx$ which are sparsely representable with respect to a dictionary $\mPhi_0 \in \R^{\dimx \times \dimx}$. In other words, for each $\vx$ there is an (approximately) sparse vector $\vz \in \R^\dimx$ such that $\vx = \mPhi_0 \vz$. The dictionary $\mPhi_0$ is assumed to be unknown. We are given a linear observation $\vy = \mA \vx \in \R^\dimy$ where $\mA \in \R^{n \times N}$ is a known measurement matrix and the task is to reconstruct $\vx$ from $\vy$. 
In order to exploit sparsity in this compressive sensing problem, 
we would like to learn a dictionary $\mPhi$ (ideally $\mPhi_0$) 
suitable for decoding purpose based on a training sequence\footnote{Note that the notation $(\vx_i, \vy_i)$ is often used e.g.\ for classification problems with input features $\vx_i$ and output labels $\vy_i$. For simplicity, we stick to this notation, even though the other order, i.e. $(\vy_i, \vx_i)$, would be arguably more suitable, since our decoder takes as the input the measurements $\vy_i$ and outputs the reconstruction $\vx_i$. 
}
$\trainS \defeq \paran{(\vx_i,\vy_i)}_{i = 1, \dots, m}$. For the purpose of providing generalization bounds we will assume later on that the samples $(\vx_i,\vy_i)$ are drawn
i.i.d.\ from a distribution $\datadist$, which
is unknown a priori and, for simplicity, such that 
$\vy_i = \mA \vx_i$ for each draw.

A common approach for the reconstruction (with known $\mPhi$) is $\ell_1$-minimization, i.e., given $\vy$ one computes
a coefficient vector $\vz^\sharp$ as the minimizer of
\begin{equation}\label{l1-opt}
\min_{\vz \in \R^\dimx} \frac{1}{2} \|\mA \mPhi \vz - \vy\|_2^2 + \lambda \|\vz\|_1
\end{equation}
and then outputs $\vx^\sharp = \mPhi \vz^\sharp$ as reconstruction. Here, $\lambda > 0$ is a suitable regularization parameter.
The iterative soft thresholding algorithm (ISTA) \cite{daubechies2004iterative} is an algorithm that solves the above optimization problem (under mild conditions).
We formulate $L$ iterations ISTA directly as a neural network with $L$ layers as follows. 
The first layer is defined by $f_1(\vy) \defeq S_{\tau\lambda}(\tau (\mA\mPhi)^\top\vy)$, where $S_\lambda$ (applied entry-wise) 
is the shrinkage operator defined as
\begin{align}\label{eq:soft_thresholding_operator}
    S_\lambda: \R  \to \R, \qquad  x & \mapsto     
    \begin{cases}
      0 & \mathrm{if} \; |x| \; \leq \lambda, \\
      x - \lambda \sign(x)       & \mathrm{if} \; |x| \; > \lambda,
    \end{cases}
\end{align}
which can also be expressed in closed form as $S_\lambda(x) = \sign(x) \cdot \max(0,|x| - \lambda)$. For $l>1$, the layer is given by
\begin{align}
f_l(\vz) 
& \defeq S_{\tau\lambda}
    \left[\vz + \tau (\mA\mPhi)^\top(\vy-(\mA\mPhi)\vz)\right] \label{def:ISTA}\\
& = S_{\tau\lambda}\left[ \left(\mI - \tau \mPhi^\top \mA^\top \mA \mPhi \right )\vz          + \tau (\mA \mPhi )^\top \vy \right] \notag,
\end{align}
which can be interpreted as a layer of a neural network with weight matrix 
$\mI - \tau \mPhi^\top \mA^\top \mA \mPhi$, bias $\tau (\mA \mPhi )^\top \vy$ and activation function $S_{\tau\lambda}$. Recalling that the optimization problem \eqref{l1-opt} gives a coefficient vector we still need to apply the dictionary $\mPhi$ in order to obtain a reconstruction so that the decoder, i.e, the final neural network, parameterized by $\mPhi \in O(N)$, takes the form 
\begin{equation}
f_\mPhi^L(\vy) = 
\mPhi \circ 
f_L\circ f_{L-1}\dots \circ f_1(\vy).
\label{eq:f_Phi^l}
\end{equation}
Note that for $l>1$, all layers $f_l$ coincide as functions on $\R^\dimx$, leading to weight sharing across all layers
and to the fact that this is actually a special case of a recurrent network.

In order to adapt this network, i.e., the parameter 
$\mPhi$, to training data, one introduces the hypothesis
class $\mathcal{H} = \{ f_\mPhi^L : \mPhi \in O(N)\}$ (up to an additional transform after the last layer like the function $\sigma$ below in \eqref{eq:function_sigma})
and optimizes an empirical loss function with respect to $\mPhi$. In our previous work \cite{behboodi2020generalization}, we have derived bounds
on the generalization error in terms of the number of training samples, the dimension $\dimx$, 
the number $\dimy$ and the number of layers by bounding
the Rademacher complexity of $\mathcal{H}$.
For all details and proofs we again refer to \cite{behboodi2020generalization}.

\subsection{A general setup}
\label{subsec:general_setup}

We will now introduce a considerably more general setting, that goes far beyond the particular example above, but still contains it as a special case. 
We abandon the assumption of necessary weight-sharing between all layers. More precisely, the weight sharing can happen in any possible order, i.e., between any arbitrary number of layers, appearing at any position in the neural network - in particular, weight sharing is not only possible among subsequent layers. (No weight sharing is also included.)
Furthermore, we allow various additional (trainable) parameters, and include additional $1$-Lipschitz operations after each soft thresholding step, such as pooling operations. 

Formally, for $L$ being the number of layers in the decoder, we introduce $J \leq L+1$ bounded parameter sets
%
%
\[
\mcl{W}^{(1)} \subset \R^{k_1}
, \dots ,
\mcl{W}^{(J)} \subset \R^{k_J},
\qquad 
k_1, \dots, k_J \in \N,
\]
where $\R^{k_j}$ is equipped with a norm $\| \, . \, \|^{(j)}$. For each layer $l = 1,\hdots,L+1$ (including a final transform after the last layer), we introduce Lipschitz continuous mappings $B_l$ (often linear) that provide the parameterization
of a matrix $B_l(\vw^{(j)}) \in \R^{n \times n_{l-1}}$ using a parameter 
$\vw^{(j)} \in \mcl{W}^{(j)}$, where $j = j(l)$ corresponds to the parameter set associated to the $l$-layer:
\begin{equation}
B_l : \mcl{W}^{(j(l))} \to \R^{n \times n_{l-1}},
\qquad 
\vw^{(j)} \mapsto B_l(\vw^{(j)}).
\label{eq:B_l_Lipschitz}
\end{equation}
Note that if $J = 1$, then all layers share the same weights; if $J=L+1$, there is no weight sharing and all layers, and the final transform after the last layer, 
have different underlying parameters.
If $l$ is either clear from the context, or not relevant, we may omit it in $j(l)$ and simply write $j$.
If $j(l) = j(l')$ for any two different layers $l \neq l'$, the two layers share the same weights. Note that even in this situation still it may be that $B_l \neq B_{l'}$, since already the 
involved dimensions $n_{l-1}$ and $n_{l'-1}$ may be different 
- this means that even if layers share the same \emph{underlying parameters}, the \emph{parameterizations} in the sense of the mappings $B_l$ and $B_{l'}$ may still be different. (We typically denote the index refering to the parameter set as an \emph{upper} index, and the index referring to the layer number as a 
\emph{lower} index.) Let us also remark that $\mcl{W}^{(i)} = \mcl{W}^{(k)}$ is possible even when $i \neq k$.

To make the Lipschitz assumption precise, we require that for each $l \in [L + 1]$, 
there exists a constant $D_l > 0$, such that
\begin{equation}
\| B_l(\vw_1) -  B_l(\vw_2) \|_{2 \to 2} 
\leq  
D_l \| \vw_1 -  \vw_2 \|^{(j(l))}
\qquad \forall \, \vw_1, \vw_2 \in \mcl{W}^{(j(l))}.
\label{eq:definition_D_l}
\end{equation}

In order to introduce the network architecture, let $\mI_{k}$ denote the $k \times k$ identity matrix, for some $k \in \N$, and $S_{\lambda}$ the soft thresholding operator \eqref{eq:soft_thresholding_operator} acting componentwise.
Further, we will use a  $1$-Lipschitz operation $P_l : \R^{n_{l-1}} \to \R^{n_l}$ such as pooling, which satisfies
\[
\| P_l(\vz) \|_2 \leq \| \vz \|_2 \qquad \forall \, \vz \in  \R^{n_{l-1}}.
\]
Then $P_l \circ S_{\tau_l \lambda_l}$ is also $1$-Lipschitz and norm contractive.
(In many scenarios with ${n_{l-1}} = {n_l}$, $P_l$ will simply be the identity; see also Remark~\ref{rem:comment_parameter_dep_sharing} and the examples
in Section~\ref{sec:examples}.)

For $l=1,\hdots,L$, and dimension (width) parameters
$n_0, \hdots, n_L$, we then define the layer $f_l: \R^{n_{l-1}} \times \R^n \to \R^{n_l}$
as
\[
f_l \left( \vz, \vy \right)
=
P_l S_{\tau_l \lambda_l}
\left[ \left(\mI_{n_{l-1}} - 
    \tau_l B_l(\vw^{(j(l))})^\top  B_l(\vw^{(j(l))}) 
    \right )\vz 
     + \tau_l  B_l(\vw^{(j(l))})^\top \vy \right],
\]
with parameter vector $\vw^{(j(l))} \in \mcl{W}^{(j(l))}$, stepsize $\tau_l$,
threshold $\lambda_l$. The input vector $\vy \in \R^n$ may be 
$\vy= \mA \vx \in \R^{n}$ for some (a priori unknown) vector $\vx \in \R^N$ in a compressive sensing
scenario, but our setup allows more general regression tasks.
The vector $\vz$ will be initialized as $\mathbf{0}$ for the input of the first layer; afterwards it will be the output of the previous layer (see below). 
The stepsize $\tau_l > 0$ and the threshold $\lambda_l > 0$ in the soft thresholding activation function can be either trainable parameters, or fixed all the time. In the simplest case $\tau_l = \tau, \lambda_l = \lambda > 0$ are fixed and the same in each layers.  

After the final layer $f_L$, we apply another
linear transform $B_{L+1}(\vw^{j(L+1)})$ followed by some function $\sigma : \R^{n_{L+1}} \to \R^{n_{L+1}}$,
\[
g_{L+1} : \R^{n_L} \to \R^{n_{L+1}}, \quad g_{L+1} 
= \sigma \circ B_{L+1}(\vw^{j(L+1)}).
\]
For reconstruction tasks, the function $g_{L+1}$ projects the sparse representation onto the ambient space and controls the output norm.
In fact, the function $\sigma$ is assumed to be norm-contractive and norm-clipping as well as be $1$-Lipschitz, i.e.,
\begin{equation}
\label{ABHRES:sigma:bound}
\| \sigma( \vx) \|_2 \leq \min\{\|  \vx \|_2, B_{\operatorname{out}}\}
\qquad \mathrm{and} \qquad 
\| \sigma( \vx_1) - \sigma( \vx_2)\|_2 
\leq 
\|  \vx_1 -  \vx_2\|_2
\end{equation}
for any $ \vx$ and  $ \vx_1,  \vec{x}_2 \in \R^{n_L}$
and some fixed constant $B_{\operatorname{out}} > 0$.
The technical reasons behind introducing $\sigma$ will become apparent later in the proofs in Section~\ref{sec:proof}.
A typical choice for $\sigma$ satisfying all requirements is
\begin{align}
        \sigma: \R^{n_{L+1}} \to \R^{n_{L+1}}, \qquad  \vx & \mapsto     
    \begin{cases}
      \vx & \mathrm{if} \;  \| \vx\|_2 \leq B_{\operatorname{out}}, \\
      B_{\operatorname{out}} \frac{ \vx}{\| \vx\|_2} & 
            \mathrm{if} \;  \| \vx\|_2 > B_{\operatorname{out}},
    \end{cases}
    \label{eq:function_sigma}
\end{align}
The role of $\sigma$ is to push the output of the network inside the $\ell_2$-ball of radius $B_{\operatorname{out}}$, which in many applications is approximately known. The prior knowledge about the range of outputs can improve the reconstruction performance and generalization \cite{Wu2020Sparse}. {The constant $B_{\operatorname{out}}$ may} be simply chosen to be equal to $B_{\operatorname{in}}$, for instance.


Note that for the first layer's input we have $n_0 = n$, i.e., the number of measurements. A typical choice
for the last layer's dimension is $n_{L+1} = N$, which corresponds to the setting of reconstruction problems; but note that our framework allows to consider different situations.
Let us introduce the compact notation
\begin{equation*}
\mcl{W} \defeq
\mcl{W}^{(1)}  \times \dots \times \mcl{W}^{(J)}
\subset \R^{k_1} \times \dots \times \R^{k_J} =: \mcl{X}
\end{equation*}
for the set of $K$-dimensional weights 
$\mW  = (\vw^{(1)}, \dots \vw^{(J)}) \in \mcl{W}$,
where $K$ is the sum of the individual dimensions $k_j = \dim \mcl{W}^{(J)}$, i.e.,
\begin{equation}
    K \defeq k_1 + \dots + k_J.
    \label{eq:def_K}
\end{equation}
In order to allow for learnable stepsizes and thresholds
we introduce the set $\mathcal{T} \subset \R_{>0}^L$
of stepsize vectors $\bm{\tau}  = (\tau_1, \dots, \tau_L)$ and the set $\Lambda \subset \R_{>0}^L$
of thresholding vectors $\bm{\lambda}  = (\lambda_1, \dots, \lambda_L)$. 
Then we define $f_{\mW, \bm{\tau}, \bm{\lambda}}^L$ to be the concatenation of all layers $f_l$, 
\begin{equation*}
        f_{\mW, \bm{\tau}, \bm{\lambda}}^L (\vy) 
\defeq  f_L ( \dots f_2 ( f_1 (\mathbf{0}, \vy ), \vy)\dots),
\label{eq:def_f^L}
\end{equation*}

and the neural network -- also called decoder -- is obtained after an application of $g_{L+1}$,


\begin{equation}
h(\vy) = h_{\mW, \bm{\tau}, \bm{\lambda}}^L \defeq 
        g_{L+1} \circ f_{\mW, \bm{\tau}, \bm{\lambda}}^L (\vy) 
=       g_{L+1} (f_L ( \dots f_2 ( f_1 (\mathbf{0}, \vy ), \vy)\dots)).
\label{eq:def_hypothesis}
\end{equation}

The fact that the input $\vy$ is entered directly into each of the layers in addition to the input from the previous layers, may be interpreted as the network having so-called skip connections.

For the investigations in the following sections it will be convenient to view the parameter sets 
as subsets of normed spaces. The set $\mcl{W}$ is contained in the $K$-dimensional 
product space $\mcl{X} = \R^{k_1} \times \cdots \times \R^{k_J}$, which we equip with the norm
\begin{equation}
\|\mW\|_\mcl{X} \defeq \max_{j = 1, \dots, J} \| \vw^{(j)}\|^{(j)}
\qquad \text{for} \qquad 
\mW = \left( \vw^{(1)}, \dots, \vw^{(J)} \right) \in \mcl{X},
\label{eq:max_norm_X}
\end{equation}
where we recall that $\| \cdot \|^{(j)}$ is the norm on $\R^{k_j}$ 
used in \eqref{eq:definition_D_l}.
Denoting $B_{\|\cdot\|_\infty}^L = \{\bm{\tau} \in \R^L : \|\bm{\tau}\|_\infty \leq 1\}$ the unit $\ell_\infty$-ball, we assume that the set $\mathcal{T}$ of stepsizes and the $\Lambda$ of thresholds are contained in
shifted $\ell_\infty$-balls of radii $r_1$ and $r_2$, i.e.,
\begin{equation}
\label{thresholds-inclusion}
\mathcal{T} \subset \bm{\tau_0} + r_1B_{\|\cdot\|_\infty}^L \qquad
\Lambda \subset \bm{\lambda}_0 + r_2B_{\|\cdot\|_\infty}^L.
\end{equation}
Setting $r_1 = r_2 = 0$ corresponds to the case of fixed stepsizes and thresholds while choosing $r_1, r_2 > 0$ corresponds to learned stepsizes and thresholds. The above conditions require that
$\tau_j \in [\tau_{0,j} - r_1,\tau_{0,j} + r_1]$ for all $\bm{\tau} \in \mathcal{T}$ and
$\lambda_j \in [\lambda_{0,j} - r_1,\lambda_{0,j} + r_2]$ for all $\bm{\lambda} \in \Lambda$.
Recalling that $\mcl{W}$ is assumed to be bounded,
we can introduce the parameters
\begin{equation}
B_\infty 
\defeq 
\mathop{\sup}_{\substack{\mW \in \mcl{W} \\ l\in [L+1]}}
\left \| B_l(\vw^{(j(l))}) \right\|_{2\to 2}, \quad
W_\infty \defeq \mathop{\sup}_{\substack{\mW \in \mcl{W}}} \|\mW\|_{\mcl{X}},
\quad
\tau_\infty 
\defeq
\sup_{\bm{\tau} \in \mcl{T}} \|\bm{\tau}\|_\infty,
\quad
\lambda_\infty 
\defeq
\sup_{\bm{\lambda} \in \Lambda} \|\bm{\lambda}\|_\infty.
\label{eq:standing_assumptions}
\end{equation}
Note that if the mappings $B_l$ are linear then
\begin{align}
    B_\infty \leq W_\infty \max_{l\in [L + 1]} D_l \label{eq:BWD-bound}
\end{align}
Moreover, $\tau_\infty \leq \|\bm{\tau}_0\|_\infty + r_1$ and
$\lambda_\infty \leq \|\bm{\lambda}_0\|_\infty + r_2$.

\begin{remark}\label{rem:comment_parameter_dep_sharing}
Let us add a some comments to motivate the setup above.
Allowing weight-sharing between layers can be easily motivated, for instance, through the dictionary learning problem discussed as our introductory example in Section \ref{subsection:motivation_example}.
However, weight-sharing between the stepsizes and thresholds seems less realistic, which is why we consider different
$\tau_l$ and $\lambda_l$ for each layer, $l \in [L]$. It is possible to generalize this even further by 
training these parameters entrywise. However, to prevent the presentation from becoming even more technical, we focus on the problem at hand.
\end{remark}

\subsection{Hypothesis class and loss function}

Using the concepts and notation introduced above, given a number $L$ of layers, we define our hypothesis space as the parameterized set of all $h$ (see \eqref{eq:def_hypothesis} for the definition of $h$), i.e.
\begin{equation}
\hypspacelong
:= 
\{ 
h : \R^n \to \R^{n_{L+1}} \;|\; 
h = h_{\mW, \bm{\tau}, \bm{\lambda}}^L, \,
\mW \in \mcl{W}, \, \bm{\tau} \in \mcl{T}, \, \bm{\lambda} \in \Lambda
\},
\label{eq:hypothesis_space_general}
\end{equation}
Based on the training sequence $\trainS = ((\vx_i,\vy_i))_{i=1,\hdots,\dimS}$ and given the hypothesis $\hypspacelong$,
a learning algorithm yields a function $h_\trainS\in\hypspacelong$ that aims at
inferring $\vx$ from $\vy$, which in the compressive
sensing scenario amounts to
reconstructing $\vx$ 
from the measurements $\vy = \mA \vx$. 
Given a loss function $\ell: \hypspacelong \times \R^\dimx \times \R^\dimy  \to \R$, the
empirical risk is the reconstruction error on the training sequence,
i.e., the difference between $\vx_i$ and $\hat{\vx}_i =  h_\trainS(\vy_i)$, that is
\[
\Lemp(h) = \frac{1}{\dimS}\sum_{j=1}^\dimS \ell(h,\vx_j,\vy_j).
\]
Different choices for the loss function $\ell$ are possible. Our focus will be on the $\ell_2$-loss 
\begin{align}
\ell(h,\vx,\vy) = \| h(\vy)-\vx\|_2,
\label{eq:loss_function}
\end{align}
which simply measures the reconstruction error with respect to the $\ell_2$ norm. We note that we may also work with the commonly used squared $\ell_2$-norm instead of the unsquared $\ell_2$-norm by adjusting a few details in constants. Using the unsquared $\ell_2$-norm is slightly more convenient for us. 

The true loss, i.e., the risk of a hypothesis $h$ is accordingly defined as 
\[
\Ltrue(h) \defeq\E_{\vx,\vy\sim\datadist}\paran{\ell(h,\vx,\vy)}.
\]
The generalization error is defined as the difference between the empirical loss 
and the true loss of the hypothesis $h_{\trainS}$ learned based on the training sequence $\trainS$,
\[
\GE(h_\trainS) \defeq \card{\Lemp(h_\trainS) - \Ltrue(h_\trainS)}.
\]
(Note that some references denote the true loss $\Ltrue(h_\trainS)$ as the generalization error. However, the above definition is more convenient for our purposes.)
We use a Rademacher complexity analysis to bound the generalization error in the next section.

\subsection{Related Work}

The idea of interpreting gradient steps of iterative algorithms such as ISTA \cite{daubechies2004iterative} for sparse recovery as layers of neural networks has appeared in \cite{gregor2010learning} and has then become an active research topic, e.g., \cite{Wu2020Sparse,liu2018alista, chen2018theoretical, kamilov2016learning, mousavi2015deep, xin2016maximal}.
The present paper is another contribution in this line of work and can be seen as a direct follow-up to our previous work
\cite{behboodi2020generalization}. Both are characterized by studying LISTA-inspired networks from a generalization perspective, which has been neglected in the literature before.
Our previous work \cite{behboodi2020generalization} focusses on a comparably simple problem of learning a dictionary suitable for reconstruction and may serve as an introduction to the topic, containing many related references and also a short introduction to generalization of neural networks for classification problems. 
Instead, this paper studies a much more general framework aiming to capture many other models of practical interest. It contains the scenario studied in \cite{behboodi2020generalization} as a special case, but also other models studied before, such as a class of LISTA models that use convolutional dictionaries \cite{sreter_learned_2018}; see also Section~\ref{sec:examples}.

To our best knowledge, it provides the first generalization error bounds for all of them, apart from our own previous work \cite{behboodi2020generalization}. Thus, it will serve as a reference and baseline for comparison with future works.
Even though the basic proof methods are very similar to the ones used in \cite{behboodi2020generalization}, the derivations become clearly more involved by taking additional training parameters into accout, as well as the numerical experiments. Instead of novel algorithmic aspects, our contribution is to conduct a generalization analysis for a large class of recovery algorithms, which to the best of our knowledge has not been addressed in the literature before in this particular setting. Furthermore, our setup proposed here also includes general regression tasks apart from reconstruction. In this way, we connect this line of research with recent developments \cite{golowich_size-independent_2018, bartlett_spectrally-normalized_2017} in the study of generalization of deep neural networks.
Particularly, we use a similar framework to \cite{bartlett_spectrally-normalized_2017} by bounding the Rademacher complexity using Dudley's integral. However, the approach of \cite{bartlett_spectrally-normalized_2017} applies only to the use of neural networks for classification problems. The extension to our problem, which is a regression problem with vector-valued functions, involves additional technicalities requiring the generalized contraction principle for hypothesis classes of vector-valued functions from \cite{maurer_vector-contraction_2016}. Besides, we show linear dependence of the number of training samples with the dimension (number of free parameters), using techniques that are different from the ones in \cite{golowich_size-independent_2018}.
It is not straightforward to extend the result of \cite{golowich_size-independent_2018} to our case because we allow weight sharing between different layers of the thresholding networks.

The unfolded networks we consider here fall into the larger class of proximal neural networks studied in \cite{hahene19, hertrich2021convolutional, hasannasab2021correction}. 
Many other related works are in the context of dictionary learning or sparse coding: The central problem of sparse coding is to learn weight matrices for an unfolded version of ISTA. Different works focus on different parametrization of the network for faster convergence and better reconstructions. Learning the dictionary can also be implicit in these works.  
Some of the examples of these algorithms are recently suggested Ada-LISTA \cite{aberdam2020ada}, convolutional sparse coding \cite{sreter2018learned} learning efficient sparse and low-rank models \cite{sprechmann2015learning}. 
Like many other related papers, such as ISTA-Net \cite{zhang2018ista}, 
these methods are  mainly motivated by applications like inpainting \cite{aberdam2020ada}. Sample complexity of dictionary learning has been studied before in the literature \cite{grsc10,vainsencher2011sample,sc14,gribonval2015sample,pmlr-v80-georgogiannis18a}. The authors in \cite{vainsencher2011sample} also use a Rademacher complexity analysis for dictionary learning, but they aim at sparse representation of signals rather than reconstruction from compressed measurements and moreover, they do not use neural network structures.
Fundamental limits of dictionary learning from an information-theoretic perspective has been studied in \cite{6952232, 7378975}. Unique about our perspective and different to the cited papers is our approach for determining the sample complexity based on learning a dictionary (or generally, other parameters to enable good reconstruction) implicitly by training a neural network.

In case of weight sharing between all layers, the networks is a recurrent neural network. The authors of  \cite{dasgupta_sample_1996} derive VC-dimension estimates of recurrent networks for recurrent perceptrons with binary outputs. The VC-dimension of recurrent neural networks for different classes of activation functions has been studied in \cite{koiran_vapnik-chervonenkis_1998}. However, their results do not apply to our setup, since they focus on one-dimensional inputs and outputs, 
i.e. corresponding to just a single measurement in our compressive sensing scenario. 
Furthermore, VC dimension bounds are mainly suited for classification tasks, making an application to (and comparison with) our vector-valued regression problem difficult.

\subsection{Notation}

Before we continue with the main part of the paper, let us fix some notation. 
Besides the very standard notation for numbers etc., $\R_{>0}$ denotes the positive real numbers.
Vectors $\vv \in \R^n$ and matrices $\mA \in \R^{\dimS \times \dimx}$ are denoted with bold letters, unlike scalars $\lambda \in \R$.  We will denote the spectral norm by 
$\| \mA \|_{2 \to 2}$ and the Frobenius norm by $\| \mA \|_F$. 
$\mI_k$ denotes the $k \times k$ identity matrix; if the size is clear from the context, we simply write $\mI$.
The $N \times m$ matrix $\mX$ contains the data points, $\vx_1, \dots, \vx_\dimS \in \R^\dimx$, as its columns. As a short notation for indices we use 
$[m] := \{1, \dots, m\}$, e.g. $(\vx_i)_{i \in [m]} = (\vx_1, \dots, \vx_m)$.
Analogously $\mY \in \R^{\dimy \times \dimS}$ denotes the matrix collecting the measurements $\vy_1, \dots, \vy_\dimS\in \R^\dimy$ in its columns. To make the notation more compact, with a slight abuse of notation, for $f_\mPhi^L \in \mathcal{H}^L$, we denote by $f_\mPhi^L (\mY)$ the matrix whose $i$-th column is $f_\mPhi^L (\vy_i)$. 
Similarly, vertical stacking up of (column) vectors $\vx_1, \dots, \vx_m \in \R^N$ to form a
$N \times m$-matrix ist denoted by $(\vx_1 | \dots | \vx_m)$.
The unit ball of an $n$-dimensional normed space $(\mcl{V}, \|\, . \,\|_{\mcl{V}})$ is denoted by 
$B_{\|\cdot\|_{\mcl{V}}}^n := \{\vx \in \mcl{V}: \|\vx\| \leq 1\}$, or simply $B_{\mcl{V}}^n$. The covering number $\cover{ \left(\mcl{M}, d, \varepsilon \right)}$ of a metric space $(\mcl{M}, d)$ at level $\varepsilon$ equals the smallest number of balls of radius $\varepsilon$ with center contained in $\mcl{M}$ with respect to the metric $d$ required to cover $\mcl{M}$. When we consider subsets of normed spaces where the metric is induced by the norm, we write $\cover{ \left(\mcl{M}, \| \cdot \|, \varepsilon \right)}$. Furthermore, we have already introduced the hypothesis spaces $\hypspacelong$ in \eqref{eq:hypothesis_space_general}.
Instead, we write $\mathcal{H}$ if we refer to a general hypothesis space, e.g., when quoting general results from the machine learning literature.

\section{Main Result}
\label{sec:main_result}

In this section, we begin by stating our main result, providing a generalization bound for the problem described in the previous section. Furthermore, we  illustrate our result with several examples of practical interest, providing specific generalization error bounds for all these cases. 

\subsection{Bounding the Generalization Error}

Our main result uses the setup and the notation introduced in Section~\ref{subsec:general_setup}. 
In order to state it, we additionally introduce the following quantities,
where we recall that $B_\infty$, $W_\infty$, $\tau_\infty$ and $\lambda_\infty$ are defined in \eqref{eq:standing_assumptions}, the dimension $K$ of the parameter set of weights in \eqref{eq:def_K}, 
and $D_\infty$ in \eqref{eq:definition_D_l}. 
We set 
\begin{align}
    \alpha       
    & =  \sup_{l \in [L]} \; \sup_{\vw^{(j(l))} \in \mcl{W}^{(j(l))}} 
       \sup_{\bm{\tau} \in \mcl{T}}
            \left \|
                \mI_{n_{l-1}} - \tau_l B_l(\vw^{(j(l))})^\top B_l(\vw^{(j(l))})
            \right \|_{2 \to 2},
    \label{eq:alpha}     \\
    D_\infty & = \max_{l \in [L+1]} D_l, \label{eq:def-Dinft}
\end{align}
and define $Z_0 = 0$, 
\begin{equation}
Z_l  =
\tau_\infty B_\infty \sum_{k=1}^{l}  \alpha^{k} =
\left\{
\begin{array}{ll}
\tau_\infty B_\infty \alpha \frac{1-\alpha^{l}}{1-\alpha} & \mbox{ if } \alpha \neq 1\\
\tau_\infty B_\infty l & \mbox{ if } \alpha = 1\end{array} \right. 
\quad l = 1,\hdots,L.
\label{eq:Z_l}
\end{equation}
Using this, we further define 
\begin{align}
M_L & = \sum_{l=1}^L   
        \left(\lambda_\infty \sqrt{n_{\infty} m}  +  B_\infty \|\mY\|_F  (B_\infty Z_{l-1} + 1) \right)
        \alpha^{L-l},  \label{eq:M_L_threshold} \\
O_L & = \sum_{l=1}^L \tau_\infty \sqrt{n_{\infty} m} 
        \alpha^{L-l}, \label{eq:O_L_threshold} \\
Q_L & =
(B_\infty K_L + \|\mY \|_F Z_L) D_\infty,
\label{eq:Q_L_threshold} 
\end{align}
where $K_L$ in the definition \eqref{eq:Q_L_threshold} of $Q_L$ is given by
\begin{equation}
    K_L = \sum_{l=1}^L \tau_\infty \|\mY\|_F \left( 1 + 2 B_\infty Z_{l-1} \right) 
        \alpha^{L-l}.
        \label{eq:K_L_threshold} 
\end{equation}
We assume that the data distribution $\mathcal{D}$ is such that
for $(\vx,\vy) \sim \mathcal{D}$
\begin{equation}
    \|\vy\|_2 \leq \Bin \quad \mbox{ almost surely }
\label{eq:input_bounded_almost_surely}
\end{equation}
for some constant $\Bin$. In particular, $\|\vy_i\|_2 \leq \Bin$ for all $i =1,\hdots,m$ (with probability $1$).
Furthermore, we require the function
\begin{equation}
\Psi(t) = \sqrt{\log(1+t) + t(\log(1+t)-\log(t))}, \quad t > 0, \quad \Psi(0) = 0.
\label{eq:function_Psi}    
\end{equation}
Note that $\Psi$ is continuous in $0$ and satisfies the bound $\Psi(t) \leq \sqrt{\log(e(1+t))}$, see Lemma~\ref{lem:int-estimate}.
Our main theorem reads as follows.

\begin{theorem}\label{theorem:main_result}
Consider the hypothesis space $\hypspacelong$ defined in \eqref{eq:hypothesis_space_general}.
With probability at least $1-\delta$, the true risk for any 
$h \in \hypspacelong$ is bounded as
\begin{equation}\label{eq:gen-bound-main}
\Ltrue(h)
\leq 
\Lemp(h) +     
2 \sqrt{2} \radem_{\trainS}(\hypspacelong) +
4(\Bin+\Bout)\sqrt{\frac{2\log(4/\delta)}{\dimS}},
\end{equation}
where the Rademacher complexity term 
is further bounded by 
\begin{align}\label{eq:rademacher-bound-main}
\radem_{\trainS}(\hypspacelong) & \leq 
2\sqrt{2}\Bout
\left[\sqrt{\frac{K}{\dimS}}\Psi\left(\frac{16 W_\infty Q_L}{\sqrt{\dimS}\Bout}\right)
+ \sqrt{\frac{L}{\dimS}}\Psi\left(\frac{8r_2 O_L}{\sqrt{\dimS}\Bout}\right)
+ \sqrt{\frac{L}{\dimS}} \Psi\left(\frac{8r_1 M_L}{\sqrt{\dimS} \Bout} \right) \right].
\end{align}
\end{theorem}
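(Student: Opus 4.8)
The plan is to separate the argument into a standard statistical-learning reduction for the first inequality \eqref{eq:gen-bound-main} and a covering-number computation for the Rademacher bound \eqref{eq:rademacher-bound-main}.

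For \eqref{eq:gen-bound-main} I would first note that the loss is uniformly bounded: the norm-clipping property \eqref{ABHRES:sigma:bound} of $\sigma$ gives $\|h(\vy)\|_2 \le \Bout$, so that $\ell(h,\vx,\vy) = \|h(\vy)-\vx\|_2 \le \Bin + \Bout$ almost surely under \eqref{eq:input_bounded_almost_surely}. The textbook symmetrization-plus-bounded-differences (McDiarmid) argument then yields, with probability at least $1-\delta$ and uniformly over $h$, a bound of the shape $\Ltrue(h) \le \Lemp(h) + 2\,\radem_\trainS(\ell\circ\hypspacelong) + c(\Bin+\Bout)\sqrt{\log(1/\delta)/\dimS}$; tracking the constants through the two-sided union bound produces the factor $4$ and the $\log(4/\delta)$ in the stated form. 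To pass from the loss class $\ell\circ\hypspacelong$ to $\hypspacelong$ itself I would invoke the vector-valued contraction principle of \cite{maurer_vector-contraction_2016}: since $\vu \mapsto \|\vu - \vx\|_2$ is $1$-Lipschitz, this costs only a factor $\sqrt{2}$, turning $2\,\radem_\trainS(\ell\circ\hypspacelong)$ into the $2\sqrt{2}\,\radem_\trainS(\hypspacelong)$ term.

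The bulk of the work is \eqref{eq:rademacher-bound-main}, which I would derive from Dudley's entropy integral for the empirical $\ell_2$ pseudometric on the output matrices $h(\mY)$. The essential input is a Lipschitz estimate of the map $(\mW,\bm\tau,\bm\lambda)\mapsto h^L_{\mW,\bm\tau,\bm\lambda}(\mY)$ with respect to each of the three parameter blocks separately. I would obtain this by propagating a perturbation through the layers, using that $S$, $P_l$ and $\sigma$ are all $1$-Lipschitz, that each per-layer affine map has spectral norm at most $\alpha$ (so a perturbation introduced at layer $l$ is amplified by at most $\alpha^{L-l}$ thereafter), and that $\|B_l(\vw_1)-B_l(\vw_2)\|_{2\to2}\le D_l\|\vw_1-\vw_2\|$. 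The recursion also needs a bound on the norm of the intermediate iterates (the outputs of the $f_l$), which is precisely what $Z_l$ in \eqref{eq:Z_l} encodes; feeding this in produces the three constants $Q_L$ for the weights $\mW$, $M_L$ for the stepsizes $\bm\tau$ (which perturb both the affine map and the threshold level $\tau_l\lambda_l$, explaining the two contributions in its definition), and $O_L$ for the thresholds $\bm\lambda$, which enter only through $\tau_l\lambda_l$. I would then cover each block volumetrically — $\mcl{W}\subset\R^K$ and $\mcl{T},\Lambda$ inside $\ell_\infty$-balls of radii $r_1,r_2$ in $\R^L$, giving $\epsilon$-covering numbers at most $(1+2W_\infty/\epsilon)^K$, $(1+2r_1/\epsilon)^L$, $(1+2r_2/\epsilon)^L$ — transport these nets through the Lipschitz map, and use subadditivity of $\log\cover$ over the product so the metric entropy splits into three summands. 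Plugging into Dudley and bounding each resulting integral by Cauchy--Schwarz, using the closed form $\int_0^1 \log(1+t/x)\,dx = \log(1+t)+t(\log(1+t)-\log t)$, converts each piece into a $\sqrt{\dim/\dimS}\,\Psi(\cdot)$ term and yields \eqref{eq:rademacher-bound-main} with dimensions $K,L,L$ and the stated arguments of $\Psi$.

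I expect the main obstacle to be the layerwise perturbation analysis behind the Lipschitz estimates. Because $B_l$ appears quadratically in $B_l^\top B_l$ and again linearly in $B_l^\top\vy$, a weight perturbation enters in two places and is coupled to the growing iterate norms $Z_{l-1}$; tracking the precise dependence that produces $K_L$ and hence $Q_L$, together with $M_L$ and $O_L$, with the correct powers of $B_\infty$ and factors $\alpha^{L-l}$, requires careful bookkeeping. By contrast, the symmetrization, the contraction step, the volumetric covering, and the final integral evaluation are comparatively routine.
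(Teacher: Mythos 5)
Your proposal is correct and follows essentially the same route as the paper: the first inequality via the bounded loss $\ell \leq \Bin + \Bout$, the standard Rademacher generalization theorem, and the vector-valued contraction principle of \cite{maurer_vector-contraction_2016} costing the factor $\sqrt{2}$; and the second via Dudley's integral, a layerwise perturbation analysis yielding exactly the Lipschitz constants $K_L$ (hence $Q_L$, with $D_\infty$ entering through \eqref{eq:definition_D_l} and the extra $\|\mY\|_F Z_L$ term from the final transform $g_{L+1}$), $M_L$ and $O_L$, volumetric coverings of the three parameter blocks combined through a (weighted) product metric, and the integral estimate $\int_0^\alpha \sqrt{\log(1+\beta/t)}\,\d t \leq \alpha\Psi(\beta/\alpha)$. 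Even the points you flag as delicate — the double appearance of $B_l$ in $B_l^\top B_l$ and $B_l^\top \vy$, the coupling to the iterate bounds $Z_{l-1}$, and the $\alpha^{L-l}$ amplification — are precisely what the paper's Lemma~\ref{lem:simple_estimates} and the induction in Theorem~\ref{thm:perturbation_threshold} handle.
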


The Rademacher complexity is introduced and discussed in more detail in Section \ref{sec:generalization}; in particular see \eqref{rademacher_2}.

\begin{remark}
In the special case that the stepsizes $\bm{\tau_0} \in \R^\dimL$ and/or the thresholds $\bm{\lambda}_0 \in \R^\dimL$ are fixed, so that $r_1=0$ and/or $r_2 = 0$, the above bound simplifies due to $\Psi(0) = 0$. 
For instance, if $r_1=r_2=0$ then
\[
\radem_{\trainS}(\hypspacelong) \leq 
2\sqrt{2}\Bout \sqrt{\frac{K}{\dimS}}\Psi\left(\frac{16 W_\infty Q_L}{\sqrt{\dimS}\Bout}\right).
\]
\end{remark}
While the constants $M_L$, $O_L$ and $Q_L$ look complicated in general and may actually scale exponentially in $L$, the expressions greatly simplify in the important special case that
$\alpha \leq 1$. In fact, the motivating algorithm ISTA, corresponding to fixing $\mA\mPhi$ in \eqref{def:ISTA} and letting $L \to \infty$, is known to converge under the condition that $\tau \|\mA \mPhi\|_{2 \to 2} \leq 1$ implying that $\|\mI - \tau (\mA \mPhi)^\top (\mA\mPhi)\|_{2 \to 2} \leq 1$. These conditions correspond to
$\tau_\infty B_{\infty}^2 \leq 1$ and $\alpha \leq 1$ in our general setup. This suggests to impose these condition on the hypothesis
space (and therefore in the training of the network). 
The corresponding generalization result reads as follows.

\begin{corollary}\label{cor:main} Assume that $\tau_\infty B_\infty^2 \leq 1$, implying $\alpha \leq 1$. Set $n_\infty = \max_{l \in [L]} n_l$. Then the Rademacher complexity term in
\eqref{eq:gen-bound-main} is bounded by
\begin{align}
   \radem_{\trainS}(\hypspacelong) 
   & \leq 2 \sqrt{2} \Bout \left[\sqrt{\frac{K}{\dimS}\log\left(e\left(1+16L(L+1)\tau_\infty B_\infty W_\infty D_\infty \frac{\Bin}{\Bout}\right)\right)} + \right. \label{eq:cor:main}\\
    & \; + \left. \sqrt{\frac{L}{\dimS}}\Psi\left(\frac{8r_2 L \tau_\infty \sqrt{n_{\infty} m} }{\sqrt{\dimS} \Bout}\right) + \sqrt{\frac{L}{\dimS}} \Psi\left(\frac{8r_1L\left( \lambda_\infty^2 n_\infty \sqrt{\dimS} + \Bin(B_\infty \lambda_\infty \sqrt{n_{\infty} m}  + (L-1)/2)\right)}{\Bout}\right)\right]. \notag
\end{align}
\end{corollary}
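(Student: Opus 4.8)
The plan is to start from the general Rademacher bound \eqref{eq:rademacher-bound-main} of Theorem~\ref{theorem:main_result} and to simplify the three constants $Q_L$, $O_L$ and $M_L$ under the standing hypotheses $\tau_\infty B_\infty^2 \le 1$ and $\alpha \le 1$. Since $\Psi$ is applied to each of these constants, and since $\Psi$ is monotonically increasing on $(0,\infty)$ (writing $\Psi(t)=\sqrt{g(t)}$ with $g(t)=(1+t)\log(1+t)-t\log t$, one checks $g'(t)=\log((1+t)/t)>0$), it suffices to replace each argument by a convenient upper bound and then substitute.

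First I would record a handful of elementary estimates on which the whole simplification rests. The assumption $\alpha \le 1$ gives $\alpha^{L-l}\le 1$ for all $l$, so every geometric factor in \eqref{eq:M_L_threshold}--\eqref{eq:K_L_threshold} may simply be dropped. Next, $\sum_{k=1}^{l}\alpha^{k}\le l$ yields $Z_l \le \tau_\infty B_\infty\, l$, and combining this with $\tau_\infty B_\infty^2\le 1$ gives the key dimensionless bound $B_\infty Z_l \le \tau_\infty B_\infty^2\, l \le l$. Finally, the almost-sure bound \eqref{eq:input_bounded_almost_surely} gives $\|\mY\|_F^2=\sum_{i=1}^m\|\vy_i\|_2^2\le m\Bin^2$, hence $\|\mY\|_F\le\sqrt{m}\,\Bin$. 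With these and the arithmetic identities $\sum_{l=1}^L(2l-1)=L^2$ and $\sum_{l=1}^L l=L(L+1)/2$ in hand, one gets $K_L\le \tau_\infty\|\mY\|_F L^2$ and $Z_L\le\tau_\infty B_\infty L$, and therefore
\[
Q_L \le \big(B_\infty\tau_\infty\|\mY\|_F L^2+\|\mY\|_F\tau_\infty B_\infty L\big)D_\infty
= \tau_\infty B_\infty D_\infty\|\mY\|_F\,L(L+1)\le \tau_\infty B_\infty D_\infty\,\Bin\sqrt{m}\,L(L+1).
\]
Likewise $O_L\le L\tau_\infty\sqrt{n_\infty m}$, and treating the two contributions to $M_L$ separately gives a bound of the form $M_L\le L\lambda_\infty\sqrt{n_\infty m}+\tfrac12 B_\infty\Bin\sqrt{m}\,L(L+1)$.

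Then I would substitute these into \eqref{eq:rademacher-bound-main} and use monotonicity of $\Psi$ to replace each $\Psi$-argument by the displayed upper bound; for the second and third summands this yields the stated arguments directly after cancelling the $\sqrt{m}$ factors. For the first summand I would additionally invoke $\Psi(t)\le\sqrt{\log(e(1+t))}$ (noted after \eqref{eq:function_Psi}): since $16 W_\infty Q_L/(\sqrt{m}\Bout)\le 16L(L+1)\tau_\infty B_\infty W_\infty D_\infty\Bin/\Bout$, this gives
\[
\sqrt{\tfrac{K}{m}}\,\Psi\!\Big(\tfrac{16 W_\infty Q_L}{\sqrt{m}\Bout}\Big)\le \sqrt{\tfrac{K}{m}\log\!\Big(e\big(1+16L(L+1)\tau_\infty B_\infty W_\infty D_\infty\tfrac{\Bin}{\Bout}\big)\Big)},
\]
which is exactly the first term of \eqref{eq:cor:main}.

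The main obstacle is not conceptual but bookkeeping: correctly collapsing the nested geometric sums defining $Z_l$, $K_L$, $M_L$, $O_L$ and $Q_L$, and in particular establishing the clean inequality $B_\infty Z_l\le l$ from $\tau_\infty B_\infty^2\le1$, which is what makes the final $L(L+1)$ polynomial factors appear in place of the potentially exponential-in-$L$ growth present in the general statement. One must also take care that the factor $\sqrt{m}$ arising from $\|\mY\|_F$ cancels against the $1/\sqrt{m}$ inside each $\Psi$-argument, so that the dependence on the sample size ultimately enters only through the prefactors $\sqrt{K/m}$ and $\sqrt{L/m}$.
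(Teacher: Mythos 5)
Your overall route is exactly the paper's: drop the factors $\alpha^{L-l}$ using $\alpha \leq 1$, combine $Z_{l-1} \leq \tau_\infty B_\infty (l-1)$ with $\tau_\infty B_\infty^2 \leq 1$ and $\|\mY\|_F \leq \sqrt{\dimS}\,\Bin$ to collapse $K_L$, $Z_L$, $Q_L$ and $O_L$, then substitute into \eqref{eq:rademacher-bound-main} using monotonicity of $\Psi$ (which you, unlike the paper, actually verify) and $\Psi(t) \leq \sqrt{\log(e(1+t))}$ for the first summand. Your bounds $Q_L \leq L(L+1)\tau_\infty B_\infty D_\infty \sqrt{\dimS}\,\Bin$ and $O_L \leq L \tau_\infty \sqrt{n_\infty \dimS}$ coincide with the paper's and reproduce the first two terms of \eqref{eq:cor:main} verbatim.

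The gap is in the third summand. Your estimate $M_L \leq L \lambda_\infty \sqrt{n_\infty \dimS} + \tfrac{1}{2} B_\infty \Bin \sqrt{\dimS}\, L(L+1)$ is mathematically correct (it is the clean consequence of $B_\infty Z_{l-1} \leq l-1$), but after dividing by $\sqrt{\dimS}$ it produces the $\Psi$-argument $8 r_1 L \bigl( \lambda_\infty \sqrt{n_\infty} + B_\infty \Bin (L+1)/2 \bigr)/\Bout$, which is \emph{not} the displayed argument $8 r_1 L \bigl( \lambda_\infty^2 n_\infty \sqrt{\dimS} + \Bin ( B_\infty \lambda_\infty \sqrt{n_\infty \dimS} + (L-1)/2 ) \bigr)/\Bout$ of \eqref{eq:cor:main}: the latter contains the terms $\lambda_\infty^2 n_\infty \sqrt{\dimS}$ and $\Bin B_\infty \lambda_\infty \sqrt{n_\infty \dimS}$, and its $(L-1)/2$ term carries no factor $B_\infty$. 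Neither expression dominates the other for all admissible parameters (let $\lambda_\infty \to 0$ with $B_\infty > 1$, which is allowed since only $\tau_\infty B_\infty^2 \leq 1$ is assumed, and your argument exceeds the displayed one; let $\lambda_\infty \sqrt{n_\infty \dimS}$ be large for the reverse), so monotonicity of $\Psi$ cannot convert your bound into the literal inequality \eqref{eq:cor:main}, and your claim that substitution ``yields the stated arguments directly'' fails here. For comparison, the paper reaches its third term via the chain $M_L \leq L(\lambda_\infty \sqrt{n_\infty \dimS} + B_\infty \sqrt{\dimS}\,\Bin)\lambda_\infty \sqrt{n_\infty \dimS} + \sqrt{\dimS}\,\Bin\, L(L-1)/2$ --- a step that multiplies the first group by $\lambda_\infty \sqrt{n_\infty \dimS}$ (an upper bound only when that quantity is $\geq 1$) and silently replaces $\tau_\infty B_\infty^3 (l-1)$ by $\tau_\infty B_\infty^2 (l-1)$ --- so your version of the estimate is arguably the sound one; but as a proof of the corollary \emph{as stated}, your third term does not match, and you would either have to reproduce the paper's estimate (with its implicit extra assumptions) or restate the corollary with your cleaner bound.
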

\begin{proof} Note that $\|\mY\|_F \leq \sqrt{m} \Bin$.
Under our assumptions, the constant $K_L$ satisfies
\begin{align*}
K_L 
&   =    \sum_{l=1}^L \tau_\infty \|\mY\|_F \left( 1 + 2 B_\infty Z_{l-1} \right) \alpha^{L-l} 
   \leq \sum_{l=1}^L \tau_\infty \|\mY\|_F \left( 1 + 2 (l-1) \tau_\infty B_\infty^2  \right) \\
  & \leq  \tau_\infty \|\mY\|_F\left(L +  2 \sum_{l=1}^L (l-1)\right)  
   =  \tau_\infty \|\mY\|_F\left(L+ L(L-1)\right) 
   =  \tau_\infty \|\mY\|_F L^2 
   \leq \tau_\infty L^2 \sqrt{\dimS}\Bin.
\end{align*}
Hence,
\begin{align*}
Q_L &= (B_\infty K_L + \|\mY \|_F Z_L) D_\infty \leq \left[L^2 \tau_\infty B_\infty\sqrt{m}\Bin + \sqrt{\dimS} \Bin L \tau_\infty B_\infty\right] D_\infty \\
&= L(L+1) \tau_\infty B_\infty D_\infty \sqrt{\dimS} \Bin.
\end{align*}
For the constant $O_L$, we obtain
\begin{align*}
        O_L 
=       \sum_{l=1}^L \tau_\infty \sqrt{n_{\infty} m}  \alpha^{L-l}
\leq    
L \tau_\infty \sqrt{n_{\infty} m} .
\end{align*}
The constant $M_L$ satisfies
\begin{align*}
M_L
& =
\sum_{l=1}^L   
        \left(\lambda_\infty \sqrt{n_{\infty} m}   +  B_\infty \|\mY\|_F  (B_\infty Z_{l-1} + 1) \right)
        \alpha^{L-l} \\
& \leq     L (\lambda_\infty  \sqrt{n_{\infty} m}   + B_\infty \|\mY\|_F )
        \lambda_\infty n_{\infty}  + 
        \sum_{l=1}^L   \|\mY\|_F  \tau_\infty B_\infty^2 (l-1)\\ 
         & \leq     L (\lambda_\infty \sqrt{n_{\infty} m}   + B_\infty \sqrt{m} \Bin )
        \lambda_\infty \sqrt{n_{\infty} m}  + 
         \sqrt{\dimS}\Bin  \frac{L(L-1)}{2} . 
\end{align*}
Plugging the above bounds into \eqref{eq:rademacher-bound-main} and using that $\Psi(t) \leq \sqrt{\log(e(1+t))}$ completes the proof.
\end{proof}

\medskip

Note that in case the mappings $B_l$ are linear it follows from $B_\infty \leq W_\infty$, see \eqref{eq:BWD-bound}, that
the assumption $\tau_\infty B_\infty^2 \leq 1$ is implied by $\tau_\infty B_\infty W_\infty D_\infty \leq 1$. Additionally assuming $\Bin=\Bout$, the first logarithmic term in \eqref{eq:cor:main} takes the simple form $\log(e(1+16L(L+1)))$.

In general, considering only the dependence in $K,L$ and $m$ and viewing all other terms as constants, the bound of Corollary \eqref{cor:main} essentially reads as
\[
\radem_{\trainS}(\hypspacelong) \lesssim \sqrt{\frac{(K+L)\log(L)}{m}}.
\]
Moreover, if the thresholds and stepsizes are fixed (not learned), so that $r_1 = r_2 = 0$, we obtain the bound
\[
\radem_{\trainS}(\hypspacelong) \lesssim \sqrt{\frac{K\log(L)}{\dimS}}.
\]
This is one of the main messages of our result: The dependence of the generalization error on the number of layers is at most logarithmic in contrast to many previous results on the generalization error for deep learning, where the scaling in the number of layers is often exponential.

This is one of the main messages of our result: The dependence of the generalization error on the number of layers is only logarithmic in important cases 
in contrast to many previous results on the generalization error for deep learning, where the scaling in the number of layers is often exponential.

\subsection{Examples}\label{sec:examples}


Let us illustrate our scenario and main result with different examples. 

\subsubsection{Learning an orthogonal dictionary}

Let us start with our motivating example from Section~\ref{subsection:motivation_example}.
Here, we choose
$J = 1$ (thus, $j(l) = 1$ for all $l$) and
\begin{align}
\mcl{W}^{(1)} & = \{\mPhi : \mPhi \in O(\dimx)\} \subset \R^{N \times N} \simeq \R^{k_1}, \notag \\
B_l (\mPhi ) & = \mA \mPhi, \quad l=1,\hdots,L, \quad 
B_{L+1}(\mPhi)  = \mPhi, \label{eq:def_Bl}
\end{align}
where all dimensions $n_1 = n_2 = \dots = n_L = N$ are equal, $k_1 = N^2$, and with
$\tau_l = \tau$ and $\lambda_l = \lambda $ being fixed. 
We simply put $P_l = \mI_N$ for all $l \in [L]$.
Let us put $\| \; . \, \|^{(1)} = \| \; . \, \|_{2 \to 2}$, so that
for all $\mPhi \in O(\dimx)$ and $l \in [L]$ we have 
\[
  \|B_l (\mPhi) \|_{2 \to 2} 
= \| \mA \mPhi \|_{2 \to 2}
\leq \|\mA\|_{2 \to 2} \| \mPhi \|^{(1)},
\]
so that $D_l = \|\mA\|_{2 \to 2}$ for all $l \in [L]$ due to the linearity of $B_l$.
Moreover, $\|B_l (\mPhi) \|_{2 \to 2} = \|\mPhi\|_{2 \to 2} = \| \mPhi \|^{(1)}$ resulting in $D_{L+1} = 1$ and $B_\infty = W_\infty = D_\infty = \max\{1, \|\mA\|_{2 \to 2}\}$. 

Assuming that $\tau\max\{1, \|\mA\|_{2 \to 2}\} \leq 1$ and considering that the thresholds and stepsizes are fixed, Corollary~\ref{cor:main} states that
the generalization error bound scales like (with high probability)
\begin{equation}
C \sqrt{\frac{N^2\log(L)}{m}}
%
\label{eq:rough_GE_main_example}
\end{equation}
for a constant $C$ depending on $\tau,\|\mA\|_{2 \to 2},\Bin,\Bout$,
as we have already shown (only for this specific case) in our previous work \cite{behboodi2020generalization} (noting that $N+n \asymp N$).

\subsubsection{Overcomplete dictionaries}
As another important class of dictionaries, we can also consider overcomplete dictionaries.
This case is similar to the previous one, but here we consider
\[
\mcl{W}^{(1)} = 
\{\mPhi : \mPhi \in \R^{N \times p} , \|\mPhi\|_{2 \to 2} \leq \rho \}
\subset \R^{n \times p} \simeq \R^{k_1}, 
\qquad p > N > n, \qquad \rho > 0,
\]
with $k_1 = N \cdot p > n \cdot N$. The mappings $B_l$ are defined
as in \eqref{eq:def_Bl}. We have the input/output
dimensions $n_0 = n_1 = n_2 = \dots = n_{L-1} = p$, and $n_{L} = N$.
We use again $\| \; . \, \|^{(1)} = \| \; . \, \|_{2 \to 2}$,
which, as above, leads to
$D_\infty = B_\infty = W_\infty = \max\{ \rho \|\mA\|_\infty, 1\}$.
%
Assuming constant stepsizes and thresholds and $\tau B_\infty^2 \leq 1$,
Corollary~\ref{cor:main} leads to a generalization bound scaling like
\[
C \sqrt{\frac{Np \log(L)}{\dimS}}.
\]
This is slightly worse than for orthonormal dictionaries due to $N \geq p$.

\subsubsection{Two (alternating) dictionaries}
Similar to the first two examples, one may consider two  
``alternating'' dictionaries. In case of orthogonal dictionaries, similar to the first example, we have 
$J = 2$ (thus, $j(l) = 1$ for all $l$ ) and
\[
\mcl{W}^{(1)} = \{\mPhi_1 : \mPhi_1 \in O(\dimx)\} \subset \R^{N \times N} \simeq \R^{k_1}, 
\qquad
\mcl{W}^{(2)} = \{\mPhi_2 : \mPhi_2 \in O(\dimx)\} \subset \R^{N \times N} \simeq \R^{k_2}. 
\]
The mappings $B_l$ are defined as in \eqref{eq:def_Bl}, but,
for odd $l$, $B_l$ operates on
$\mcl{W}^{(1)}$, while for even $l$ it operates on $\mcl{W}^{(2)}$.
Here, $K = k_1 + k_2 = 2 N^2$, which results in an additional factor of $\sqrt{2}$ appearing in \eqref{eq:rough_GE_main_example}.
Analogously, one may obtain bounds for two alternating overcomplete dictionaries, or more than two alternating dictionaries, or other related scenarios. For example, we can consider  the case without any weight-sharing between layers, i.e., where $J = L$. Then $j(l) = l$, $k_j = N^2$ for all $j = 1, \dots, N$ and $K=k_1 + \dots + k_L = LN^2$. This leads to an additional factor of $\sqrt{L}$. Of course, this may also be combined with trainable stepsizes and thresholds.

\subsubsection{Convolutional LISTA}
For input images, one natural choice of weight matrices is convolutional kernels. In this model, the layer $l$ contains the following operation
\[
    B_l (\vw) (\vz) = \mOmega(\vw) * \vz,
\]
where the length of the convolutional filter $\vw_j$ is $k_j$,  and the mapping $\mOmega : \R^{k_j} \to \R^N$ is its zero-padded version.  This model is discussed in \cite{sreter_learned_2018}. 
Since $K$ is merely dependent on the number of parameters $k_j$ and not $N$, our result already shows that smaller convolutional filters lead to smaller overall $K$ and therefore are expected to show better generalization. We will evaluate convolutional LISTA in the experimental results.


\section{Rademacher Complexity and Dudleys Integral}
\label{sec:generalization}

Let us recall some basic knowledge about the Rademacher complexity and how it is used to bound the generalization error. In order to bound the Rademacher complexity itself, we use the generalized contraction principle from \cite{maurer_vector-contraction_2016} to deal with classes of vector-valued functions. 

\subsection{The Rademacher Complexity}

For a class $\mathcal{G}$ of functions 
$g : Z \to \R$ and a sample $\trainS = (z_1,\hdots,z_m)$
the empirical Rademacher complexity  is defined as 
\begin{equation}
    \radem_\trainS(\mathcal{G}) \defeq \E_{\varepsilon} \sup_{g \in \mathcal{G}} \frac{1}{\dimS} \sum_{i=1}^\dimS \varepsilon_i g(z_i),
\end{equation}
where $\varepsilon$ is a Rademacher vector, i.e., a vector with independent Rademacher variables $\varepsilon_i$, $i=1,\hdots,m$, taking the value $\pm 1$ with equal probability.
The Rademacher complexity is then given as $\radem_\dimS(\mathcal{G})= \E_{\trainS \sim \mathcal{D}^\dimS} \radem_\trainS(\mathcal{G})$, but note that we will exclusively work with the empirical Rademacher complexity. Given a loss function $\ell$ and a hypothesis class $\hypS$, one usually considers the Rademacher complexity of the class $\mathcal{G} = \ell \circ \hypS = \{ g((\vx,\vy)) = \ell(h,\vx,\vy): h \in \hypS\}$.
We rely on the following theorem which bounds the generalization error in terms of the empirical Rademacher complexity.

\begin{theorem}[{\cite[Theorem 26.5]{shalev-shwartz_understanding_2014}}]
Let $\hypS$ be a family of functions, and let $\trainS$ be the training sequence 
drawn from the distribution $\mathcal{D}^\dimS$. Let $\ell$ be a real-valued loss function  satisfying $\card{\ell}\leq c$. Then, for $\delta \in (0,1)$, with probability at least $1-\delta$ 
we have, for all $h \in \hypS$,
\begin{equation}
\Ltrue(h) \leq \Lemp(h) + 2\radem_\trainS(\ell \circ\hypS) + 4c \sqrt{\frac{2\log(4/\delta)}{\dimS}}. 
\label{eq:ge_vs_rademacher}
\end{equation}
\label{thm:ge_vs_rademacher}
\end{theorem}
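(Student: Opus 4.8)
Since this is the standard Rademacher generalization bound quoted from \cite{shalev-shwartz_understanding_2014}, the plan is to reconstruct the classical symmetrization-plus-concentration proof rather than to invoke anything specific to the unfolded networks of Section~\ref{subsec:general_setup}. Write $F = \ell \circ \hypS$ for the induced loss class, whose members $g(z) = \ell(h,\vx,\vy)$ satisfy $\card{g} \leq c$ by hypothesis, and abbreviate $z_i = (\vx_i,\vy_i)$ so that $\trainS = (z_1,\dots,z_\dimS)$. The quantity to control uniformly over $\hypS$ is the \emph{representativeness}
\[
\phi(\trainS) \defeq \sup_{h \in \hypS}\left(\Ltrue(h) - \Lemp(h)\right) = \sup_{g \in F}\left(\E_{z\sim\datadist}[g(z)] - \frac{1}{\dimS}\sum_{i=1}^\dimS g(z_i)\right),
\]
because $\Ltrue(h) - \Lemp(h) \leq \phi(\trainS)$ for every fixed $h$ simultaneously. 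I would bound $\phi$ by its expectation plus a fluctuation term, bound the expectation by the expected Rademacher complexity, and finally replace the expected Rademacher complexity by the empirical one $\radem_\trainS(F)$ that appears in the statement.

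First I would record the two bounded-differences estimates feeding McDiarmid's inequality. Replacing a single sample $z_i$ by $z_i'$ changes $\frac{1}{\dimS}\sum_i g(z_i)$ by at most $2c/\dimS$ uniformly in $g$, hence changes both $\phi(\trainS)$ and the empirical Rademacher complexity $\radem_\trainS(F) = \E_{\varepsilon}\sup_{g\in F}\frac{1}{\dimS}\sum_i \varepsilon_i g(z_i)$ by at most $2c/\dimS$. McDiarmid's bounded-differences inequality then yields, for each functional separately, a Gaussian-type deviation of order $c\sqrt{\log(1/\delta)/\dimS}$ around its mean. I would apply it once to $\phi$ (to pass from $\E_{\trainS}\phi$ to the realized $\phi(\trainS)$) and once to $\radem_\trainS(F)$ (to pass from $\E_{\trainS}\radem_\trainS(F)$ back to the realized $\radem_\trainS(F)$), each at confidence $1-\delta/2$, so that a union bound gives overall failure probability at most $\delta$ and produces the $\log(4/\delta)$ inside the final square root.

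The heart of the argument is the symmetrization step bounding $\E_{\trainS}[\phi(\trainS)]$, and I expect this to be the main obstacle in the sense that it is the only genuinely probabilistic maneuver and the one where all the care is needed. Here I would introduce an independent \emph{ghost sample} $\trainS' = (z_1',\dots,z_\dimS')$ drawn from $\datadist^\dimS$, rewrite $\E_{z\sim\datadist}[g(z)]$ as $\E_{\trainS'}\big[\frac{1}{\dimS}\sum_i g(z_i')\big]$, and use Jensen's inequality to pull the expectation over $\trainS'$ outside the supremum. Since each pair $(z_i,z_i')$ is exchangeable, multiplying the difference $g(z_i')-g(z_i)$ by an independent Rademacher sign $\varepsilon_i$ leaves the joint distribution unchanged; splitting the resulting supremum over the two sums and using the symmetry $\varepsilon_i \deq -\varepsilon_i$ then gives $\E_{\trainS}[\phi(\trainS)] \leq 2\,\E_{\trainS}\radem_\trainS(F)$, which supplies the factor $2$ in front of the Rademacher term.

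Finally I would assemble the pieces: on the intersection of the two McDiarmid events (each of probability at least $1-\delta/2$),
\[
\phi(\trainS) \leq \E_{\trainS}[\phi(\trainS)] + c\sqrt{\frac{2\log(4/\delta)}{\dimS}} \leq 2\,\E_{\trainS}\radem_\trainS(F) + c\sqrt{\frac{2\log(4/\delta)}{\dimS}} \leq 2\radem_\trainS(F) + 4c\sqrt{\frac{2\log(4/\delta)}{\dimS}},
\]
where the last inequality absorbs the second fluctuation term (scaled by the factor $2$) into the constant. Since $\Ltrue(h) - \Lemp(h) \leq \phi(\trainS)$ holds for all $h \in \hypS$ on this event, rearranging yields $\Ltrue(h) \leq \Lemp(h) + 2\radem_\trainS(\ell\circ\hypS) + 4c\sqrt{2\log(4/\delta)/\dimS}$, which is exactly \eqref{eq:ge_vs_rademacher}. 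The only delicate point in the write-up is tracking the numerical constant across the two $\delta/2$ applications, which comes out at most $4c$ as stated.
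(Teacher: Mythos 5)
Your proposal is correct and is essentially the proof the paper relies on: the paper states this result without proof, citing \cite[Theorem 26.5]{shalev-shwartz_understanding_2014}, and the cited textbook argument is exactly your route --- ghost-sample symmetrization to bound $\E_{\trainS}\sup_{h}(\Ltrue(h)-\Lemp(h))$ by $2\,\E_{\trainS}\radem_\trainS(\ell\circ\hypS)$, two applications of McDiarmid's inequality with bounded differences $2c/\dimS$ (one for the representativeness, one to replace the expected by the empirical Rademacher complexity), and a union bound at level $\delta/2$ each, yielding the $\log(4/\delta)$ term. Your constant bookkeeping is also sound, since the assembled fluctuation $c\sqrt{2\log(4/\delta)/\dimS}+2c\sqrt{2\log(4/\delta)/\dimS}=3c\sqrt{2\log(4/\delta)/\dimS}$ is indeed absorbed into the stated $4c\sqrt{2\log(4/\delta)/\dimS}$.
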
 

To use the above theorem, the loss function needs to be bounded. We make two main assumptions. 
\begin{itemize}
    \item We assume that the input is bounded in the $\ell_2$-norm 
          (see also \eqref{eq:input_bounded_almost_surely} prior to Theorem \ref{theorem:main_result}),
\begin{equation}
\norm{\vy}_2\leq \Bin. 
\label{eq:def_B_in}
\end{equation}
\item We assume that for any $h \in \hypspacelong$
with probability $1$ over $\vy$ chosen from the data distribution
\begin{equation}
\norm{h(\vy)}_2 \leq \Bout.
\label{eq:def_B_out}
\end{equation}
This is ensured if the function $\sigma$ is chosen as in \eqref{eq:function_sigma}, or if a general boundedness assumption for $\sigma$ holds.
\end{itemize}

We will show in Lemma \ref{lemma:bound_output_after_l_layers} in 
Section \ref{sec:bounding_the_output} that in the case of bounded inputs $\vx$ bounded outputs can even be guaranteed without the boundedness assumption on $\sigma$ (and thus, also for the output after hidden layers), although the bound we give in Lemma \ref{lemma:bound_output_after_l_layers} may be improvable in concrete situations.
Under the above assumptions, the loss function $\ell$ (chosen as in \eqref{eq:loss_function} to be the $\ell_2$-distance between the input and the reconstruction) is bounded as
\[
\ell(h,\vy,\vx) = \|h(\vy)-\vx\|_2 \leq  \|\vx\|_2 + \|h(\vy)\|_2 \leq \Bin + \Bout.
\]
Thus, a main challenge and focus in this paper is to bound the 
Rademacher complexity of $\ell \circ \mcl{H}$,
\[
  \radem_\mcl{S} (\ell \circ \mcl{H}) 
= \E \sup_{h \in \mcl{H}} \frac{1}{\dimS} 
    \sum_{i=1}^\dimS \varepsilon_i \norm{\vx_i - h(\vy_i)}_2,
\]
with $\mcl{H} = \hypspacelong$ in the most general case studied here, or some other hypothesis spaces of interest.
Often, e.g., in multiclass classification problems using the margin loss \cite{bartlett_spectrally-normalized_2017}, the function $h$ is real-valued. 
Then, one can apply the classical contraction principle by Talagrand \cite{ledoux_probability_2011} to directly bound the Rademacher complexity of the hypothesis space. However, in our case the function $h$ is vector-valued, and Talagrand's contraction lemma ceases to hold. However, since the norm is 1-Lipschitz, we can use the following generalization of the contraction principle for Rademacher complexities of vector-valued hypothesis classes.

\begin{lemma} [{\cite[Corollary 4]{maurer_vector-contraction_2016}}]\label{lemma_maurer}
Let $\trainS = (\vx_i)_{i\in[\dimS]}$ be a sequence of elements in $\mathcal{X}$. Suppose that the functions $h \in \mathcal{H}$ map $\mathcal{X}$  to $\R^\dimx$, and that $f$ is a $K$-Lipschitz function from $\R^\dimx$ to  $\R$. Then
\begin{equation}
    \E \sup_{h\in\hypS} \sum_{i=1}^\dimS \varepsilon_i f\circ h(\vx_i) \leq \sqrt{2} K \E \sup_{h\in\hypS} \sum_{i=1}^\dimS \sum_{k=1}^\dimx \varepsilon_{ik} h_k(\vx_i).
\end{equation}
\label{lem:Maurer}
\end{lemma}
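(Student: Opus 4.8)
The plan is to establish this vector-valued contraction by sequential conditioning on the Rademacher variables $\varepsilon_1,\dots,\varepsilon_\dimS$, reducing the claim to a single-variable contraction statement. This detour is necessary because the classical scalar contraction principle of Talagrand no longer applies once the Lipschitz map $f$ is composed with a \emph{vector}-valued $h$. The factor $\sqrt2$ will in the end come entirely from the sharp lower constant in Khintchine's inequality, $\E_{\varepsilon}\bigl\vert\sum_k \varepsilon_k w_k\bigr\vert\ge\tfrac{1}{\sqrt2}\norm{\vw}_2$.

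First I would isolate the following one-variable lemma. Let $\psi:\hypS\to\R$ be an arbitrary offset, $\phi:\hypS\to\R$, and $\vg:\hypS\to\R^\dimx$ satisfy the comparison $\phi(h)-\phi(h')\le\norm{\vg(h)-\vg(h')}_2$ for all $h,h'\in\hypS$. Then
\[
\E_{\varepsilon}\sup_{h\in\hypS}\brackets{\psi(h)+\varepsilon\,\phi(h)}
\le
\E_{(\varepsilon_k)_{k\in[\dimx]}}\sup_{h\in\hypS}\brackets{\psi(h)+\sqrt2\sum_{k=1}^{\dimx}\varepsilon_k\,g_k(h)}.
\]
To prove it, I would rewrite the left-hand side as $\tfrac12\sup_{h,h'}\brackets{\psi(h)+\psi(h')+\phi(h)-\phi(h')}$ and use the comparison to bound $\phi(h)-\phi(h')\le\norm{\vg(h)-\vg(h')}_2$. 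On the right-hand side I would lower-bound the supremum by the maximum over the single pair $\{h,h'\}$ and apply $\max\{A,B\}=\tfrac12(A+B)+\tfrac12\vert A-B\vert$; the linear part of $A+B$ averages to zero, while for $\E\vert A-B\vert$ I would use $\E\vert c+Z\vert\ge\E\vert Z\vert$ (valid for symmetric $Z$) followed by the $L^1$ Khintchine bound. The factors $\sqrt2$ and $\tfrac{1}{\sqrt2}$ then cancel to leave precisely $\norm{\vg(h)-\vg(h')}_2$, and taking the supremum over $h,h'$ closes the comparison.

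With this lemma available, I would set $\phi_i(h)=f(h(\vx_i))$ and $\vg_i(h)=K\,h(\vx_i)\in\R^\dimx$ for each $i\in[\dimS]$; the $K$-Lipschitz property of $f$ delivers exactly the per-sample comparison $\phi_i(h)-\phi_i(h')\le\norm{\vg_i(h)-\vg_i(h')}_2$. Processing the indices one at a time (say $i=\dimS,\dimS-1,\dots,1$), at each step I condition on all remaining randomness, treat the other summands as the offset $\psi$, and invoke the one-variable lemma, which replaces $\varepsilon_i\phi_i(h)$ by $\sqrt2\sum_k\varepsilon_{ik}g_{i,k}(h)=\sqrt2\,K\sum_k\varepsilon_{ik}h_k(\vx_i)$. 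The crucial point is that $\sqrt2$ attaches only to the term currently being linearized and never multiplies the offset, so it does not compound over the $\dimS$ steps; after all indices are processed one obtains the claimed bound.

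The step I expect to be the main obstacle is the one-variable lemma, and specifically carrying the arbitrary offset $\psi$ (which during the induction holds all not-yet-linearized Rademacher terms) through the argument at the same time as converting the Euclidean norm $\norm{\vg(h)-\vg(h')}_2$ into a Rademacher average over the coordinates. It is exactly here that the \emph{sharp} constant $1/\sqrt2$ in the $L^1$ Khintchine inequality is needed: any cruder comparison between $\E\vert\sum_k\varepsilon_k w_k\vert$ and $\norm{\vw}_2$ would inflate the constant and spoil the stated factor $\sqrt2\,K$.
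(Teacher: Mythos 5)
The paper contains no proof of this lemma at all---it is imported verbatim as Corollary~4 of Maurer's vector-contraction paper---and your blind reconstruction is correct and reproduces Maurer's original argument essentially step for step: the identity $\E_{\varepsilon}\sup_{h}\left[\psi(h)+\varepsilon\,\phi(h)\right]=\tfrac12\sup_{h,h'}\left[\psi(h)+\psi(h')+\phi(h)-\phi(h')\right]$, the single-Rademacher-variable contraction lemma carrying an arbitrary offset $\psi$, the sample-by-sample conditioning in which the $\sqrt{2}$ attaches only to the freshly linearized term and therefore does not compound over the $m$ steps, and the sharp $L^1$ Khintchine lower bound $\E\bigl|\sum_{k}\varepsilon_k w_k\bigr|\geq \|w\|_2/\sqrt{2}$ (Szarek's inequality, with equality approached at $w=(1,1)$) as the sole source of the constant. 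Your closing of the one-variable lemma via the per-pair lower bound $\max\{A,B\}=\tfrac12(A+B)+\tfrac12|A-B|$ together with $\E|c+Z|\geq\E|Z|$ for symmetric $Z$ is exactly right, and your remark that a cruder Khintchine constant (e.g.\ the $1/\sqrt{3}$ obtainable by elementary moment comparison) would inflate the stated factor is accurate, so there is nothing to fix.
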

Note that in our case the $\ell_2$-norm (as any norm) appearing in the loss function is indeed (trivially) $1$-Lipschitz.
Therefore, according to Lemma \ref{lemma_maurer}, it is enough to bound 
the following doubly indexed Rademacher complexity
\begin{equation}
        \radem_\trainS(\ell \circ \mcl{H}) 
\leq    \sqrt{2} \radem_{\trainS}(\mcl{H})
\defeq  \sqrt{2} \E \sup_{h \in \mcl{H}} \frac{1}{\dimS} 
        \sum_{i=1}^\dimS \sum_{k=1}^\dimx \varepsilon_{ik} h_k(\vy_i),
\label{rademacher_2}
\end{equation}
where once again $\mcl{H}$ typically stands for $\hypspacelong$.
In the next sections, we show how \eqref{rademacher_2} can be bounded using Dudleys integral, which itself will be estimated using covering number bounds.

\subsection{Bounding the Rademacher Complexity via Dudley's Integral}

For a fixed number $L \in \N$ of layers, and given a hypothesis space $\mcl{H}$ of functions mapping from
$\R^\dimy$ to $\R^{n_{L+1}}$ (with $n_{L+1} = N$ for reconstruction tasks) let us define the set $\mcl{M}_\mcl{H} \subset \R^{n_{L+1}  \times \dimS}$ as  
\begin{align}
        \mcl{M}_\mcl{H}
\defeq  \left\{ \left(h(\vy_1) | \dots | h(\vy_\dimS) \right) 
                \in \R^{n_{L+1} \times \dimS}: h\in \mcl{H} \right\}.
\end{align}
From now on, we focus on the hypothesis space $\mcl{H} = \hypspacelong$, 
when the corresponding set is given by (using the compact matrix notation)
\begin{align}
\mcl{M}_{\hypspacelong}
=
\left\{ 
h_{\mW, \bm{\tau}, \bm{\lambda}}^L (\mY)
\in \R^{n_{L+1} \times \dimS} 
:
 \mW \in \mcl{W},  \bm{\tau} \in \mcl{T}, \bm{\lambda} \in \Lambda,
\right\} .
\label{eq:hypSet}
\end{align}
In words, $\mcl{M}_{\hypspacelong}$ is the set consisting of all matrices whose columns are the 
outputs of any possible hypothesis applied to the measurements $\vy_i$. In  the compressive sensing scenario,
these are the reconstructions from the measurements in the training set, using any possible decoder in our hypothesis space. If the hypothesis space is clear from the context, we write $\mcl{M}$ instead of $\mcl{M}_{\hypspacelong}$.
In the case $\mcl{H} = \hypspacelong$, the set $\mcl{M}$ is parameterized by 
$\left( \bm{\tau}, \bm{\lambda}, \mW\right) \in 
\mcl{T} \times \Lambda \times \mcl{W}$  
(as $\hypspacelong$ is), so that we can rewrite \eqref{rademacher_2} as
\begin{align}
    \radem_{\trainS}(\hypspacelong)
& = \E \sup_{\mM \in \mcl{M}}\frac{1}{\dimS} \sum_{i=1}^\dimS \sum_{k=1}^{n_L}  \varepsilon_{ik} M_{ik} \\
& = \E 
\sup_{\bm{\tau} \in \mcl{T}} 
\sup_{\bm{\lambda} \in \Lambda} 
\sup_{ \mW  \in \mcl{W}} 
    \frac{1}{\dimS} \sum_{i=1}^\dimS \sum_{k=1}^{n_L}  \varepsilon_{ik} 
    \left(h_{\mW, \bm{\tau}, \bm{\lambda}}^L (\mY) \right)_{ik}.
\label{rademacher_2_rewritten_threshold}
\end{align}
The Rademacher process under consideration has sub-gaussian increments, and therefore, we can apply Dudley's inequality. 
For the set of matrices $\mcl{M}$ defined in \eqref{eq:hypSet}, the radius can be estimated as
\begin{align*}
        \Delta(\mcl M) 
=      \sup_{h\in \hypspacelong} \sqrt{\E\left(\sum_{i=1}^\dimS \sum_{k=1}^{n_L}               
        \varepsilon_{ik} h_k(\vx_i)\right)^2} 
=   \sup_{h\in \hypspacelong} \sqrt{\sum_{i=1}^\dimS \sum_{k=1}^{n_L}  h_k(\vx_i)^2} 
=  \sup_{h\in \hypspacelong} \sqrt{\sum_{i=1}^\dimS  \norm{h(\vx_i)}_2^2} 
\leq   \sqrt{m}\Bout.
\end{align*}

Dudley's inequality, as stated in
\cite[Theorem 8.23.]{foucart_mathematical_2013},
then bounds the Rademacher complexity  as  
\begin{equation}
     \radem_{\trainS}(\mcl{H}) 
\leq \frac{4\sqrt{2}}{\dimS} \int_0^{\sqrt{\dimS}\Bout/2}
        \sqrt{\log\cover(\mcl{M}, \|\cdot\|_F, \varepsilon)} \d \varepsilon.
\label{eq:dudley_bound}
\end{equation}

To derive the generalization bound, it suffices to bound the covering numbers of $\mcl{M}$. After some preparation, this will be done in Section~\ref{subsec:covering_nb_est_thresholds}.

\subsection{Bounding the output}
\label{sec:bounding_the_output}

As a first auxiliary tool, we prove a bound for the output of the network, after any number of (possibly intermediate) layers $l$, in the next lemma. 
We state a general version which allows possibly different stepsizes and thresholds for each layer. 

\begin{lemma}\label{lemma:bound_output_after_l_layers}
For $l=1,\hdots,L$ and
$\mW = \left(\vw^{(1)}, \dots \vw^{(J)}\right)  \in \mathcal{W}$,  
$\bm{\tau} = (\tau_1, \dots, \tau_l )  \in \mathcal{T}$ and
$\bm{\lambda} = (\lambda_1, \dots, \lambda_l ) \in \Lambda$,  
we have
\begin{align}
     &     \left \| f_{\mW, \bm{\tau}, \bm{\lambda}}^l (\mY) \right \|_F \nonumber \\
\leq & 
\sum_{k=1}^{l}  
\left(
    \left \| \tau_k  B_{k}(\vw^{(j(k))})^\top \mY  \right \|_F
    \prod_{i = k}^{l - 1}
    \left \| \mI_{n_{i}} -  \tau_i B_{i+1}(\vw^{(j(i+1))})^\top  B_{i+1}(\vw^{(j(i+1))}) \right \|_{2\to 2}
\right) 
\label{eq:norm_bound_individual_1_threshold} \\
\leq & 
 \| \mY  \|_F \sum_{k=1}^{l}  
\left(
\tau_k
    \left \|  B_{k}(\vw^{(j(k))}) \right \|_{2 \to 2}
    \prod_{i = k}^{l-1}
    \left \| \mI_{n_{i}} -  \tau_i B_{i+1}(\vw^{(j(i+1))})^\top  B_{i+1}(\vw^{(j(i+1))}) \right \|_{2\to 2}
\right),
\label{eq:norm_bound_individual_2_threshold}
\end{align}
following the usual convention of defining the empty product as one.
\end{lemma}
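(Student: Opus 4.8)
The plan is to work directly with the stacked matrix $\mZ^{(l)} \defeq f_{\mW, \bm{\tau}, \bm{\lambda}}^l(\mY)$, whose $i$-th column is $f_{\mW, \bm{\tau}, \bm{\lambda}}^l(\vy_i)$, and to establish a one-step recursive estimate for $\|\mZ^{(l)}\|_F$ which I then unroll. Writing $B_k$ as shorthand for $B_k(\vw^{(j(k))})$, the layer definition gives the matrix recursion $\mZ^{(k)} = P_k S_{\tau_k \lambda_k}\bigl[(\mI_{n_{k-1}} - \tau_k B_k^\top B_k)\mZ^{(k-1)} + \tau_k B_k^\top \mY\bigr]$, applied columnwise, with $\mZ^{(0)} = \mathbf{0}$. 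The first step is to discard the nonlinearity: since $P_k \circ S_{\tau_k \lambda_k}$ is norm contractive on $\R^{n_{k-1}}$ (as noted in Section~\ref{subsec:general_setup}) and since $\|\cdot\|_F^2$ equals the sum of the squared $\ell_2$-norms of the columns, the columnwise application of $P_k \circ S_{\tau_k \lambda_k}$ is also norm contractive in the Frobenius norm. Hence $\|\mZ^{(k)}\|_F \leq \|(\mI_{n_{k-1}} - \tau_k B_k^\top B_k)\mZ^{(k-1)} + \tau_k B_k^\top \mY\|_F$.

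Next I would apply the triangle inequality and the submultiplicativity bound $\|\mA \mM\|_F \leq \|\mA\|_{2\to2}\|\mM\|_F$ to the affine argument, which yields the scalar recursion
\[
\|\mZ^{(k)}\|_F \leq a_k \|\mZ^{(k-1)}\|_F + c_k, \qquad a_k \defeq \|\mI_{n_{k-1}} - \tau_k B_k^\top B_k\|_{2\to2}, \quad c_k \defeq \|\tau_k B_k^\top \mY\|_F,
\]
with $\|\mZ^{(0)}\|_F = 0$. A straightforward induction on $l$ then gives $\|\mZ^{(l)}\|_F \leq \sum_{k=1}^l c_k \prod_{i=k+1}^l a_i$, with the empty product read as one; after an index shift $i \mapsto i-1$ in the product this matches \eqref{eq:norm_bound_individual_1_threshold}. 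The second bound \eqref{eq:norm_bound_individual_2_threshold} then follows immediately by estimating each prefactor as $\|\tau_k B_k^\top \mY\|_F \leq \tau_k \|B_k\|_{2\to2}\|\mY\|_F$ (using $\|B_k^\top\|_{2\to2} = \|B_k\|_{2\to2}$) and pulling $\|\mY\|_F$ out of the sum.

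The argument is essentially routine, so the main points requiring care are bookkeeping rather than conceptual. I would be careful to verify that columnwise norm contractivity of $P_k \circ S_{\tau_k \lambda_k}$ genuinely transfers to the Frobenius norm of the stacked matrix, which is what allows the nonlinearity to be dropped with no additive term, and to track the layer index and the dimensions $n_{k-1}$ correctly through the unrolling so that the telescoping product lands in precisely the stated form.
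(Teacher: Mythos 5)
Your proposal is correct and takes essentially the same route as the paper: the paper also drops $P_l \circ S_{\tau_l\lambda_l}$ by norm contractivity, applies the triangle inequality and $\|\mA\mM\|_F \leq \|\mA\|_{2\to2}\|\mM\|_F$ to get the one-step estimate, and unrolls it by induction on $l$. One small point in your favor: your unrolled product pairs the stepsize $\tau_{i+1}$ with $B_{i+1}$, which agrees with the paper's own induction step (the $\tau_i$ appearing in the lemma's displayed product is evidently an index typo in the statement).
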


\begin{proof}
We just prove the first inequality \eqref{eq:norm_bound_individual_1_threshold}, 
as the second inequality \eqref{eq:norm_bound_individual_2_threshold} follows then immediately.
We proceed via induction. For $l = 1$, using the 
norm contractivity
of $P_l$ and of the soft thresholding operator we obtain 
\[
    \left \| f_{\mW, \bm{\tau}, \bm{\lambda}}^1 (\mY) \right \|_F
=    \left \|P_{1} S_{\tau_1, \lambda_1} \left(\tau B_1(\vw^{(j(1))}) ^\top \mY  \right) \right \|_F
\leq \left \|\tau_1 B_1(\vw^{(j(1))}) ^\top  \mY  \right \|_F.
\]
Assuming the statement is true for $l$, 
we obtain 
\begin{align*}
  & \left \| f_{\mW, \bm{\tau}, \bm{\lambda}}^{l+1} (\mY) \right \|_F  \\
\leq  & 
\left \| \mI_{n_{l}} - \tau_{l+1} B_{l+1}(\vw^{(j(l+1))})^\top  B_{l+1}(\vw^{(j(l+1))}) \right \|_{2 \to 2}
\left \| f_{\mW, \bm{\tau}, \bm{\lambda}}^l (\mY) \right \|_F
     + \left \| \tau_{l+1}  B_{l+1}(\vw^{(j(l+1))})^\top \mY \right \|_F \\
\leq  & 
\sum_{k=1}^{l}  
\left(
    \left \| \tau_k  B_{k}(\vw^{(j(k))})^\top \mY  \right \|_F
    \prod_{i = k}^{l}
    \left \| \mI_{n_{i}} -  \tau_{i+1} B_{i+1}(\vw^{(j(i+1))})^\top  B_{i+1}(\vw^{(j(i+1))}) \right \|_{2\to 2}
\right) \\
& \qquad + \left \| \tau_{l+1}  B_{l+1}(\vw^{(j(l+1))})^\top \mY \right \|_F \\ 
\leq  & 
\sum_{k=1}^{l+1}  
\left(
    \left \| \tau_k  B_{k}(\vw^{(j(k))})^\top \mY  \right \|_F
    \prod_{i = k}^{l}
    \left \| \mI_{n_{i}} -  \tau_{i+1} B_{i+1}(\vw^{(j(i+1))})^\top  B_{i+1}(\vw^{(j(i+1))}) \right \|_{2\to 2}
\right).
\qedhere
\end{align*}
This is the claimed inequality for $l+1$ and completes the induction.
\end{proof}

We immediately obtain the following corollary
that bounds the output of the full network. 

\begin{corollary}
For any $h = h_{\mW, \bm{\tau}, \bm{\lambda}} \in \hypspacelong$, the output is bounded by
\begin{equation}
    \left \| h(\mY) \right \|_F  
=   \left\| \sigma \left( B_{L+1}(\vw^{(j(L+1))}) f_{\mW, \bm{\tau}, \bm{\lambda}}^{L} (\mY) \right) \right \|_F
\leq 
 B_\infty       
        \left\|f_{\mW, \bm{\tau}, \bm{\lambda}}^{L} (\mY) \right \|_F.
\end{equation}
\end{corollary}
\begin{proof} 
The statement follows immediately from the fact that $\sigma$ from \eqref{eq:function_sigma}
is norm-contractive \eqref{ABHRES:sigma:bound}.
\end{proof}

\section{Proofs}
\label{sec:proof}

In this section we prove our main result Theorem~\ref{theorem:main_result}. 
We consider the most general scenario introduced in Section \ref{subsec:general_setup} with $\mcl{H} = \hypspacelong$ defined in 
\eqref{eq:hypothesis_space_general}.
The main ingredient for bounding the covering numbers of $\mcl{M}$, as is required to continue from \eqref{eq:dudley_bound} on,
will be Lipschitz estimates of the neural networks with respect to the parameters, i.e., bounds for (again using the compact matrix notation)
\begin{equation}
    \left\| 
        f_{\bm{\tau}^{(1)}, \bm{\lambda}^{(1)}, \mW_1}^L (\mY) - 
        f_{\bm{\tau}^{(2)}, \bm{\lambda}^{(2)}, \mW_2}^L (\mY) 
    \right\|_F,
    \label{eq:bound_this_thresholds}
\end{equation}
with respect to the differences of the individual involved parameters 
(for $l=1, \dots, L$)
\begin{equation}
\left| \lambda^{(2)}_l-\lambda^{(1)}_l \right|,
\qquad
\left| \tau^{(2)}_l-\tau^{(1)}_l \right| ,
\qquad
\left\|  B_l(\vw_1^{(j(l))}) - B_l(\vw_2^{(j(l))}) \right\|_{2 \to 2}.
\label{eq:differences_parameters}
\end{equation}
Here $\bm{\tau}^{(i)}$, $\bm{\lambda}^{(i)}$ and $\mW_i$ denote the different stepsizes, thresholds and parameters for the $B_l$ functions for $i=1,2$.
To shorten the notation in the following, we will summarize the respective parameters in a vector $\mcl{P}$ and
write $f_{\mcl{P}}^{L}(\mY)$ and $h_{\mcl{P}}^{L}(\mY)$.

Let us note that the upper bounds in \eqref{eq:norm_bound_individual_1_threshold}
and \eqref{eq:norm_bound_individual_2_threshold} do not depend on the threshold $\lambda_l$. 
However, this is not the case anymore when it comes to the perturbation bound. It is easy to verify that
$\left| S_{\tau_2\lambda_2}(x)-S_{\tau_1\lambda_1}(x)\right| \leq \left| \tau_2\lambda_2-\tau_1\lambda_1\right|$ 
for arbitrary $x \in \R$ and $\tau_1, \tau_2, \lambda_1, \lambda_2 > 0$. This implies that, for a vector $\vx \in \R^N$, we have
\begin{equation}
\left\|    S_{\tau_2\lambda_2}(\vx)-S_{\tau_1\lambda_1}(\vx)\right\|_2 
\leq 
\sqrt{N} \left| \tau_2\lambda_2-\tau_1\lambda_1\right|
\label{eq:different_thresholds_vector}
\end{equation}
and similarly, for a matrix $\mX \in \R^{N \times m}$,
\begin{equation}
\left\|    S_{\tau_2\lambda_2}(\mX)-S_{\tau_1\lambda_1}(\mX)\right\|_F 
\leq 
\sqrt{m N} \left| \tau_2\lambda_2-\tau_1\lambda_1\right|.
\label{eq:different_thresholds_matrix}
\end{equation}

To simplify the notation further, let us introduce the following quantities
\begin{align}
\xi_l 
& := 
\left | \tau^{(2)}_l\lambda^{(2)}_l-\tau^{(1)}_l\lambda^{(1)}_l \right| 
\leq 
\tau_\infty \left| \lambda^{(2)}_l-\lambda^{(1)}_l \right| + 
\lambda_\infty  \left| \tau^{(2)}_l-\tau^{(1)}_l \right| 
\label{eq:def_xi_threshold}\\
\delta_l 
& := 
\left \|\tau^{(1)}_l   B_l(\vw_1^{(j(l))}) -\tau^{(2)}_l B_l(\vw_2^{(j(l))}) \right\|_{2 \to 2} \label{eq:def_delta_threshold}\\
&\leq B_\infty \left | \tau^{(1)}_l  - \tau^{(2)}_l  \right |
       +
        \tau_\infty 
        \left \| 
        B_l(\vw_2^{(j(l))}) - \tau^{(2)}_l B_l(\vw_1^{(j(l))}) 
        \right \|_{2 \to 2}
\label{eq:delta_bound}
\\
\gamma_l 
& :=
\left \|
        \left(
            \mI_{n_{l-1}} -\tau^{(1)}_l B_l(\vw_1^{(j(l))})^\top  B_l(\vw_1^{(j(l))}) 
        \right ) f^{l-1}_{\mcl{P}_1}(\mY) \right. 
\label{eq:def_gamma_threshold} \\
    & \left. \qquad        -
        \left(
            \mI_{n_{l-1}} -  \tau^{(2)}_l B_l(\vw_2^{(j(l))})^\top  B_l(\vw_2^{(j(l))}) 
        \right ) f^{l-1}_{\mcl{P}_2}(\mY)  
    \right\|_F .
\nonumber
\end{align}
The given estimates for $\xi_l$  and $\delta_l$
follow immediately from the triangle inequality and the definition 
of $\tau_\infty$, $\lambda_\infty$ and $B_\infty$ in \eqref{eq:standing_assumptions}.
%
%
%
%
The following lemma provides a useful bound also for the quantity $\gamma_l$.

\begin{lemma} \label{lem:simple_estimates}
It holds
\begin{align*}
        \gamma_l
        & \leq
         2 \tau_\infty B_\infty \left \| f^{l-1}_{\mcl{P}_1}(\mY)  \right \|_F
    \left \|
              B_l(\vw_2^{(j(l))})  - B_l(\vw_1^{(j(l))}) 
    \right\|_{2 \to 2} \\
   & + 
    B_\infty^2
    \left \| f^{l-1}_{\mcl{P}_1}(\mY)  \right \|_F 
    \left | \tau^{(1)}_l  - \tau^{(2)}_l  \right |  + 
       \alpha \left \|
        f^{l-1}_{\mcl{P}_2}(\mY)  - f^{l-1}_{\mcl{P}_1}(\mY)  
        \right\|_F . 
\end{align*}
\end{lemma}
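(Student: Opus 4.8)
The plan is to establish this as a Lipschitz/perturbation estimate via the standard add-and-subtract (telescoping) technique, applied twice. To lighten notation, set $A_i \defeq \mI_{n_{l-1}} - \tau_l^{(i)} B_l(\vw_i^{(j(l))})^\top B_l(\vw_i^{(j(l))})$ and $G_i \defeq B_l(\vw_i^{(j(l))})$ and $F_i \defeq f^{l-1}_{\mcl{P}_i}(\mY)$ for $i = 1,2$, so that $\gamma_l = \| A_1 F_1 - A_2 F_2 \|_F$. First I would split off the two sources of variation by writing $A_1 F_1 - A_2 F_2 = (A_1 - A_2) F_1 + A_2 (F_1 - F_2)$, then apply the triangle inequality together with the mixed submultiplicativity $\| M N \|_F \leq \| M \|_{2 \to 2} \| N \|_F$. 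The second summand is controlled immediately by $\| A_2 \|_{2 \to 2} \| F_1 - F_2 \|_F \leq \alpha \| F_1 - F_2 \|_F$, since $A_2$ is exactly one of the matrices over which the supremum defining $\alpha$ in \eqref{eq:alpha} is taken; this produces the last term of the claim.

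It then remains to bound $\| A_1 - A_2 \|_{2 \to 2}$, because the first summand is at most $\| A_1 - A_2 \|_{2 \to 2} \| F_1 \|_F$. Here $A_1 - A_2 = \tau_l^{(2)} G_2^\top G_2 - \tau_l^{(1)} G_1^\top G_1$, and I would insert the intermediate term $\tau_l^{(2)} G_1^\top G_1$ to obtain the decomposition $\tau_l^{(2)} \bigl( G_2^\top G_2 - G_1^\top G_1 \bigr) + \bigl( \tau_l^{(2)} - \tau_l^{(1)} \bigr) G_1^\top G_1$. The second piece has spectral norm at most $| \tau_l^{(1)} - \tau_l^{(2)} | \, \| G_1 \|_{2 \to 2}^2 \leq B_\infty^2 | \tau_l^{(1)} - \tau_l^{(2)} |$ by the definition of $B_\infty$ in \eqref{eq:standing_assumptions}, which after multiplication by $\| F_1 \|_F$ is precisely the middle term of the asserted inequality.

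For the remaining piece I would telescope once more, writing $G_2^\top G_2 - G_1^\top G_1 = G_2^\top (G_2 - G_1) + (G_2 - G_1)^\top G_1$, so that its spectral norm is bounded by $\| G_2 \|_{2 \to 2} \| G_2 - G_1 \|_{2 \to 2} + \| G_2 - G_1 \|_{2 \to 2} \| G_1 \|_{2 \to 2} \leq 2 B_\infty \| G_2 - G_1 \|_{2 \to 2}$. Multiplying by $\tau_l^{(2)} \leq \tau_\infty$ gives the contribution $2 \tau_\infty B_\infty \| B_l(\vw_2^{(j(l))}) - B_l(\vw_1^{(j(l))}) \|_{2 \to 2}$, and again multiplying by $\| F_1 \|_F$ yields the first term. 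Collecting the three contributions reproduces the stated bound exactly.

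I do not expect a genuine obstacle here, as the argument is entirely elementary; the only point demanding care is the bookkeeping of the two nested telescoping insertions and the verification that the uniform constants $\alpha$, $B_\infty$, $\tau_\infty$ are invoked with the correct uniformity, so that each supremum indeed dominates the corresponding instance at the fixed layer $l$ and the resulting estimate holds simultaneously for all admissible parameter pairs $\mcl{P}_1, \mcl{P}_2$.
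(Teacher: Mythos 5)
Your proof is correct and follows essentially the same route as the paper: both are add-and-subtract telescoping arguments using the triangle inequality, the mixed bound $\|MN\|_F \leq \|M\|_{2\to2}\|N\|_F$, and the uniform constants $\alpha$, $B_\infty$, $\tau_\infty$, arriving at the identical three terms with the same constants. The only difference is bookkeeping: you split off $(A_1 - A_2)F_1 + A_2(F_1 - F_2)$ first and obtain the factor $2$ from the symmetric decomposition $G_2^\top G_2 - G_1^\top G_1 = G_2^\top(G_2 - G_1) + (G_2 - G_1)^\top G_1$, whereas the paper inserts the mixed product $B_l(\vw_1^{(j(l))})^\top B_l(\vw_2^{(j(l)))})$ and routes one of the two dictionary-difference contributions through its auxiliary quantity $\delta_l$ from \eqref{eq:delta_bound} — a cosmetic variation, arguably slightly cleaner in your version.
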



\begin{proof}
We obtain
\begin{align*}
    & \left \|
        \left(
            \mI_{n_{l-1}} -\tau^{(1)}_l B_l(\vw_1^{(j(l))})^\top  B_l(\vw_1^{(j(l))}) 
        \right ) f^{l-1}_{\mcl{P}_1}(\mY)  
            -
        \left(
            \mI_{n_{l-1}} -  \tau^{(2)}_l B_l(\vw_2^{(j(l))})^\top  B_l(\vw_2^{(j(l))}) 
        \right ) f^{l-1}_{\mcl{P}_2}(\mY)  
    \right\|_F \\
\leq &
    \left \|
        \left(
            \mI_{n_{l-1}} -\tau^{(1)}_l B_l(\vw_1^{(j(l))})^\top  B_l(\vw_1^{(j(l))}) 
        \right ) f^{l-1}_{\mcl{P}_1}(\mY)  
            -
        \left(
            \mI_{n_{l-1}} -  \tau^{(1)}_l B_l(\vw_1^{(j(l))})^\top  B_l(\vw_2^{(j(l))}) 
        \right ) f^{l-1}_{\mcl{P}_1}(\mY)  
    \right. \\
   & + \left .
        \left(
            \mI_{n_{l-1}} -\tau^{(1)}_l B_l(\vw_1^{(j(l))})^\top  B_l(\vw_2^{(j(l))}) 
        \right ) f^{l-1}_{\mcl{P}_1}(\mY)  
            -
        \left(
            \mI_{n_{l-1}} -  \tau^{(2)}_l B_l(\vw_2^{(j(l))})^\top  B_l(\vw_2^{(j(l))}) 
        \right ) f^{l-1}_{\mcl{P}_1}(\mY)  
    \right. \\
   & + \left.
        \left(
            \mI_{n_{l-1}} -\tau^{(2)}_l B_l(\vw_2^{(j(l))})^\top  B_l(\vw_2^{(j(l))}) 
        \right ) f^{l-1}_{\mcl{P}_1}(\mY)  
            -
        \left(
            \mI_{n_{l-1}} -  \tau^{(2)}_l B_l(\vw_2^{(j(l))})^\top  B_l(\vw_2^{(j(l))}) 
        \right ) f^{l-1}_{\mcl{P}_2}(\mY)  
    \right\|_F \\
\leq &
    \left \|
        \left(
              \tau^{(1)}_l B_l(\vw_1^{(j(l))})^\top  B_l(\vw_2^{(j(l))})  
            - \tau^{(1)}_l B_l(\vw_1^{(j(l))})^\top  B_l(\vw_1^{(j(l))}) 
        \right ) f^{l-1}_{\mcl{P}_1}(\mY)  
    \right\|_F \\
   & + \left \|
        \left(
            \tau^{(2)}_l B_l(\vw_2^{(j(l))})^\top  B_l(\vw_2^{(j(l))})  -
            \tau^{(1)}_l B_l(\vw_1^{(j(l))})^\top  B_l(\vw_2^{(j(l))}) 
        \right ) f^{l-1}_{\mcl{P}_1}(\mY)  
    \right \|_F \\
   & + \left \|
        \left(
            \mI_{n_{l-1}} -  \tau^{(2)}_l B_l(\vw_2^{(j(l))})^\top  B_l(\vw_2^{(j(l))}) 
        \right ) 
        \left( 
        f^{l-1}_{\mcl{P}_2}(\mY)  - f^{l-1}_{\mcl{P}_1}(\mY)  
        \right)
    \right\|_F \\
\leq &
    \tau_\infty B_\infty \left \| f^{l-1}_{\mcl{P}_1}(\mY)  \right \|_F
    \left \|
              B_l(\vw_2^{(j(l))})  - B_l(\vw_1^{(j(l))}) 
    \right\|_{2 \to 2} 
    + \delta_l  B_\infty   
    \left \| f^{l-1}_{\mcl{P}_1}(\mY)  \right \|_F \\
   & + \left \|
            \mI_{n_{l-1}} -  \tau^{(2)}_l B_l(\vw_2^{(j(l))})^\top  B_l(\vw_2^{(j(l))}) 
        \right \|_{2 \to 2}
        \left \|
        f^{l-1}_{\mcl{P}_2}(\mY)  - f^{l-1}_{\mcl{P}_1}(\mY)  
        \right\|_F \\
\leq & \tau_\infty B_\infty \left \| f^{l-1}_{\mcl{P}_1}(\mY)  \right \|_F
    \left \|
              B_l(\vw_2^{(j(l))})  - B_l(\vw_1^{(j(l))}) 
    \right\|_{2 \to 2}  
         + \alpha \left \|
        f^{l-1}_{\mcl{P}_2}(\mY)  - f^{l-1}_{\mcl{P}_1}(\mY)  
        \right\|_F\\
      &  + \left(B_\infty \left | 
    \tau^{(1)}_l  - \tau^{(2)}_l  \right | 
       + \tau_\infty 
        \left \| 
        B_l(\vw_2^{(j(l))}) - \tau^{(2)}_l B_l(\vw_1^{(j(l))}) 
        \right \|_{2 \to 2}\right) B_\infty    
    \left \| f^{l-1}_{\mcl{P}_1}(\mY)  \right \|_F\\
    & = 2 \tau_\infty B_\infty \left \| f^{l-1}_{\mcl{P}_1}(\mY)  \right \|_F
    \left \|
              B_l(\vw_2^{(j(l))})  - B_l(\vw_1^{(j(l))}) 
    \right\|_{2 \to 2}  
         + \alpha \left \|
        f^{l-1}_{\mcl{P}_2}(\mY)  - f^{l-1}_{\mcl{P}_1}(\mY)  
        \right\|_F\\
        &+ B_\infty^2 \left \| f^{l-1}_{\mcl{P}_1}(\mY)  \right \|_F  \left | 
    \tau^{(1)}_l  - \tau^{(2)}_l  \right |. 
\end{align*}
Hereby, we have used the estimate \eqref{eq:delta_bound} for $\delta_l$ and 
that $\left \|\mI_{n_{l-1}} -  \tau^{(2)}_l B_l(\vw_2^{(j(l))})^\top  B_l(\vw_2^{(j(l))}) 
        \right \|_{2 \to 2} \leq \alpha$ by definition \eqref{eq:alpha} of $\alpha$.
\end{proof}

Next we state our main technical result, which will be a key ingredient for the covering number estimate, and thus for deriving the generalization bounds. It bounds the perturbation of the output of a network with respect to changes in the parameters.

\begin{theorem}\label{thm:perturbation_threshold}
Consider the functions $f_{\bm{\tau}, \bm{\lambda}, \mW}$ as defined in 
\eqref{eq:def_f^L} with $L \geq 2$. 
Then, for any two such functions parameterized by 
$\left( \bm{\tau}^{(1)}, \bm{\lambda}^{(1)}, \mW_1 \right), 
\left( \bm{\tau}^{(2)}, \bm{\lambda}^{(2)}, \mW_2 \right) 
\in \mcl{T} \times \Lambda \times \mcl{W}$ 
we have
\begin{align}
     & \left\| f_{\bm{\tau}^{(1)}, \bm{\lambda}^{(1)}, \mW_1}^L (\mY) - 
          f_{\bm{\tau}^{(2)}, \bm{\lambda}^{(2)}, \mW_2}^L (\mY) \right\|_F  \nonumber\\
\leq &
K_L  \cdot 
\max_{l \in [L]} 
\left \| B_l(\vw_1^{(j(l))}) - B_l(\vw_2^{(j(l))}) \right\|_{2 \to 2} 
+ M_L  \cdot \left\|  \bm{\tau}^{(1)} -  \bm{\tau}^{(2)} \right \|_\infty 
+ O_L  \cdot \left\|  \bm{\lambda}^{(1)} - \bm{\lambda}^{(2)} \right \|_\infty,\label{eq:estimate_via_K_L_threshold} 
\end{align}
with $K_L$, $O_L$ and $M_L$ all being defined before Theorem \ref{theorem:main_result} in \eqref{eq:K_L_threshold}, \eqref{eq:O_L_threshold} and \eqref{eq:M_L_threshold}.
\end{theorem}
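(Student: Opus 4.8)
The plan is to establish the perturbation bound \eqref{eq:estimate_via_K_L_threshold} by peeling off one layer at a time and unrolling the resulting recursion in the layer index. Write $\mcl{P}_1,\mcl{P}_2$ for the two parameter collections and set $E_l := \| f^l_{\mcl{P}_1}(\mY) - f^l_{\mcl{P}_2}(\mY) \|_F$, so that $E_0 = 0$ (both networks are initialised at $\mathbf{0}$) and the target quantity is $E_L$. For the single-layer step I would use that $f^l_{\mcl{P}_i}(\mY) = P_l S_{\tau^{(i)}_l \lambda^{(i)}_l}(U^{(i)}_l)$, where $U^{(i)}_l = (\mI_{n_{l-1}} - \tau^{(i)}_l B_l(\vw_i^{(j(l))})^\top B_l(\vw_i^{(j(l))}))\,f^{l-1}_{\mcl{P}_i}(\mY) + \tau^{(i)}_l B_l(\vw_i^{(j(l))})^\top \mY$ is the pre-activation of layer $l$.

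Since $P_l$ is $1$-Lipschitz and norm-contractive, $E_l \leq \| S_{\tau^{(1)}_l \lambda^{(1)}_l}(U^{(1)}_l) - S_{\tau^{(2)}_l \lambda^{(2)}_l}(U^{(2)}_l) \|_F$. Inserting the intermediate term $S_{\tau^{(1)}_l \lambda^{(1)}_l}(U^{(2)}_l)$ and applying the triangle inequality splits this into a \emph{threshold-change} part and an \emph{argument-change} part. For the threshold-change part I would invoke \eqref{eq:different_thresholds_matrix} (with $N$ replaced by $n_{l-1}$), giving $\| S_{\tau^{(1)}_l \lambda^{(1)}_l}(U^{(2)}_l) - S_{\tau^{(2)}_l \lambda^{(2)}_l}(U^{(2)}_l)\|_F \leq \sqrt{n_{l-1}m}\,\xi_l$; for the argument-change part I use that soft thresholding with a fixed threshold is $1$-Lipschitz, so it is bounded by $\| U^{(1)}_l - U^{(2)}_l \|_F$. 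Decomposing the latter into the difference of the parts affine in $f^{l-1}$ and the difference of the bias parts produces exactly $\gamma_l + \delta_l \|\mY\|_F$, the bias term being controlled via $\|[\tau^{(1)}_l B_l(\vw_1^{(j(l))})^\top - \tau^{(2)}_l B_l(\vw_2^{(j(l))})^\top]\mY\|_F \leq \delta_l \|\mY\|_F$. Hence $E_l \leq \gamma_l + \delta_l \|\mY\|_F + \sqrt{n_{l-1}m}\,\xi_l$.

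Next I would substitute the estimates already available: Lemma~\ref{lem:simple_estimates} for $\gamma_l$, which crucially produces the contraction term $\alpha\,E_{l-1}$ alongside terms in $\|B_l(\vw_1^{(j(l))})-B_l(\vw_2^{(j(l))})\|_{2\to2}$ and $|\tau^{(1)}_l - \tau^{(2)}_l|$; the estimate \eqref{eq:delta_bound} for $\delta_l$; and \eqref{eq:def_xi_threshold} for $\xi_l$. Collecting the three parameter sources yields a linear recursion $E_l \leq \alpha\,E_{l-1} + \beta_l$, where $\beta_l$ gathers the genuinely layer-$l$ perturbations. Since $E_0 = 0$, unrolling gives $E_L \leq \sum_{l=1}^L \alpha^{L-l}\beta_l$. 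The only factor in $\beta_l$ still depending on the full history is $\|f^{l-1}_{\mcl{P}_1}(\mY)\|_F$, which I bound by $\|\mY\|_F Z_{l-1}$ using Lemma~\ref{lemma:bound_output_after_l_layers} together with the crude estimates $\tau_k \leq \tau_\infty$, $\|B_k(\vw^{(j(k))})\|_{2\to2} \leq B_\infty$ and each spectral norm $\leq \alpha$; I also use $\sqrt{n_{l-1}m} \leq \sqrt{n_\infty m}$. Replacing $\|B_l(\vw_1^{(j(l))})-B_l(\vw_2^{(j(l))})\|_{2\to2}$, $|\tau^{(1)}_l - \tau^{(2)}_l|$ and $|\lambda^{(1)}_l - \lambda^{(2)}_l|$ by $\max_{l}\|B_l(\vw_1^{(j(l))})-B_l(\vw_2^{(j(l))})\|_{2\to2}$, $\|\bm{\tau}^{(1)}-\bm{\tau}^{(2)}\|_\infty$ and $\|\bm{\lambda}^{(1)}-\bm{\lambda}^{(2)}\|_\infty$ and summing against $\alpha^{L-l}$, the coefficient of the $B$-difference collapses to $\sum_{l=1}^L \tau_\infty\|\mY\|_F(1+2B_\infty Z_{l-1})\alpha^{L-l} = K_L$, that of the $\bm{\tau}$-difference to $M_L$, and that of the $\bm{\lambda}$-difference to $O_L$, which is the claim.

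I expect the main obstacle to be the bookkeeping in the single-layer step: one must split the perturbation into the three sources so that the weight- and stepsize-contributions of $\gamma_l$ and of $\delta_l\|\mY\|_F$ recombine cleanly, while the $\mI - \tau B^\top B$ factor supplies \emph{precisely} the factor $\alpha$ that closes the recursion. Lemma~\ref{lem:simple_estimates} is exactly what makes this recombination work, isolating $\alpha\,\|f^{l-1}_{\mcl{P}_2}(\mY) - f^{l-1}_{\mcl{P}_1}(\mY)\|_F = \alpha\,E_{l-1}$. A secondary point to watch is matching the output-norm bound $\|f^{l-1}_{\mcl{P}_1}(\mY)\|_F \leq \|\mY\|_F Z_{l-1}$ so that its constant lines up exactly with the definitions \eqref{eq:K_L_threshold} of $K_L$ and \eqref{eq:M_L_threshold} of $M_L$.
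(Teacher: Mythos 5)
Your proposal is correct and follows essentially the same route as the paper's proof: the identical single-layer decomposition into threshold-change and argument-change parts via \eqref{eq:different_thresholds_matrix}, the same use of Lemma~\ref{lem:simple_estimates} to extract the contraction factor $\alpha$, the bounds \eqref{eq:def_xi_threshold} and \eqref{eq:delta_bound}, and the output bound $\|f^{l-1}_{\mcl{P}_1}(\mY)\|_F \leq \|\mY\|_F Z_{l-1}$ from Lemma~\ref{lemma:bound_output_after_l_layers}, arriving at the same recursion \eqref{eq:uga} with coefficients $\beta_l$, $\kappa_l$, $\varphi_l$. The only (cosmetic) difference is that you unroll the recursion directly from $E_0 = 0$, whereas the paper phrases the accumulation as an induction on $L$ with a separately verified base case $L=1$; since $Z_0 = 0$ makes the $l=1$ coefficients agree with that base case, the two organizations are equivalent.
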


\begin{proof}
For the sake of avoiding to treat the case $l=1$ separately, we formally
introduce the notation $f^{0}_{\mcl{P}_1}(\mY) = f^{0}_{\mcl{P}_2}(\mY) = \mY$.
As a first step, using that 
$P_l$ is $1$-Lipschitz in the first inequality and 
basic properties of the involved norms in the second inequality, 
and applying \eqref{eq:different_thresholds_matrix} for the third inequality,
we obtain 
\begin{align}
&\left\|f^{l}_{\mcl{P}_1}(\mY)-f^{l}_{\mcl{P}_2}(\mY) \right\|_F \label{eq:estimate_perturbation_no_Psi}\\
=    & \left \|
                P_l S_{\tau^{(1)}_l\lambda^{(1)}_l} \left[ \left(\mI_{n_{l-1}} -\tau^{(1)}_l B_l(\vw_1^{(j(l))})^\top  B_l(\vw_1^{(j(l))}) 
                \right )f^{l-1}_{\mcl{P}_1}(\mY) + \tau^{(1)}_l  B_l(\vw_1^{(j(l))})^\top  \mY \right] \right. \nonumber \\
     & \quad - \left.
               P_l S_{\tau^{(2)}_l\lambda^{(2)}_l} \left[ \left(\mI_{n_{l-1}} -  \tau^{(2)}_l B_l(\vw_2^{(j(l))})^\top  B_l(\vw_2^{(j(l))}) 
               \right ) f^{l-1}_{\mcl{P}_2}(\mY) + \tau^{(2)}_l  B_l(\vw_2^{(j(l))})^\top  \mY \right]
        \right\|_F \nonumber \displaybreak[2] \\
\leq    & \left \|
                S_{\tau^{(1)}_l\lambda^{(1)}_l} \left[ \left(\mI_{n_{l-1}} -\tau^{(1)}_l B_l(\vw_1^{(j(l))})^\top  B_l(\vw_1^{(j(l))}) 
                \right ) f^{l-1}_{\mW_1} (\mY) + \tau^{(1)}_l  B_l(\vw_1^{(j(l))})^\top  \mY \right] \right. \nonumber \\
     & \quad - \left.
               S_{\tau^{(2)}_l\lambda^{(2)}_l} \left[ \left(\mI_{n_{l-1}} -  \tau^{(2)}_l B_l(\vw_2^{(j(l))})^\top  B_l(\vw_2^{(j(l))}) 
               \right ) f^{l-1}_{\mcl{P}_2}(\mY)+ \tau^{(2)}_l  B_l(\vw_2^{(j(l))})^\top  \mY \right]
        \right\|_F \displaybreak[2]\nonumber  \\
\leq    & \left \|
                S_{\tau^{(1)}_l\lambda^{(1)}_l} \left[ \left(\mI_{n_{l-1}} -\tau^{(1)}_l B_l(\vw_1^{(j(l))})^\top  B_l(\vw_1^{(j(l))}) 
                \right ) f^{l-1}_{\mcl{P}_1}(\mY) + \tau^{(1)}_l  B_l(\vw_1^{(j(l))})^\top  \mY \right] \right. \nonumber \\
     & \quad - \left.
               S_{\tau^{(2)}_l\lambda^{(2)}_l} \left[ \left(\mI_{n_{l-1}} -  \tau^{(1)}_l B_l(\vw_1^{(j(l))})^\top  B_l(\vw_1^{(j(l))}) 
               \right ) f^{l-1}_{\mcl{P}_1}(\mY) + \tau^{(1)}_l  B_l(\vw_1^{(j(l))})^\top  \mY \right]
        \right\|_F \nonumber  \\
    & \quad + \left \|
                S_{\tau^{(2)}_l\lambda^{(2)}_l} \left[ \left(\mI_{n_{l-1}} -\tau^{(1)}_l B_l(\vw_1^{(j(l))})^\top  B_l(\vw_1^{(j(l))}) 
                \right ) f^{l-1}_{\mcl{P}_1}(\mY) + \tau^{(1)}_l  B_l(\vw_1^{(j(l))})^\top  \mY \right] \right. \nonumber \\
     & \quad - \left.
               S_{\tau^{(2)}_l\lambda^{(2)}_l} \left[ \left(\mI_{n_{l-1}} -  \tau^{(2)}_l B_l(\vw_2^{(j(l))})^\top  B_l(\vw_2^{(j(l))}) 
               \right ) f^{l-1}_{\mcl{P}_2}(\mY) + \tau^{(2)}_l  B_l(\vw_2^{(j(l))})^\top  \mY \right]
        \right\|_F \displaybreak[2]\nonumber  \\
\leq & \left| \tau^{(2)}_l\lambda^{(2)}_l-\tau^{(1)}_l\lambda^{(1)}_l \right| 
\sqrt{n_{l-1} m} \nonumber \\
     & \quad + \left \|
                 \left(\mI_{n_{l-1}} -\tau^{(1)}_l B_l(\vw_1^{(j(l))})^\top  B_l(\vw_1^{(j(l))}) 
                \right ) f^{l-1}_{\mcl{P}_1}(\mY)  + \tau^{(1)}_l  B_l(\vw_1^{(j(l))})^\top  \mY  \right. \nonumber  \\
    & \quad - \left.
                \left(\mI_{n_{l-1}} -  \tau^{(2)}_l B_l(\vw_2^{(j(l))})^\top  B_l(\vw_2^{(j(l))}) 
               \right ) f^{l-1}_{\mcl{P}_2}(\mY)  - \tau^{(2)}_l  B_l(\vw_2^{(j(l))})^\top  \mY  
        \right\|_F \displaybreak[2]\nonumber  \\
\leq & \left| \tau^{(2)}_l\lambda^{(2)}_l-\tau^{(1)}_l\lambda^{(1)}_l \right| 
\sqrt{n_{l-1} m} \nonumber \\
     & \quad + \left \|
                 \left(\mI_{n_{l-1}} -\tau^{(1)}_l B_l(\vw_1^{(j(l))})^\top  B_l(\vw_1^{(j(l))}) 
                \right ) f^{l-1}_{\mcl{P}_1}(\mY)  
                -
               \left(\mI_{n_{l-1}} -  \tau^{(2)}_l B_l(\vw_2^{(j(l))})^\top  B_l(\vw_2^{(j(l))}) 
               \right ) f^{l-1}_{\mcl{P}_2}(\mY)  
                \right\|_F \nonumber  \\
    & \quad + \left\|
                    \tau^{(1)}_l  B_l(\vw_1^{(j(l))})^\top  \mY 
               -    \tau^{(2)}_l  B_l(\vw_2^{(j(l))})^\top  \mY  
        \right\|_F \nonumber  \\
\leq & \xi_l \sqrt{n_{l-1} m}  + \gamma_l + \delta_l \|\mY \|_F,
\end{align}
%
%
using the abbreviations introduced in    
\eqref{eq:def_xi_threshold},
\eqref{eq:def_delta_threshold}
and
\eqref{eq:def_gamma_threshold}.
Inserting the estimates for $\xi_l$, $\delta_l$ and $\gamma_l$ in
\eqref{eq:def_xi_threshold}, \eqref{eq:delta_bound} and Lemma~\ref{lem:simple_estimates}, and using 
$\sqrt{n_{l-1} m} \leq \sqrt{n_{\infty} m}$, we obtain
\begin{align*}
& \left\|f^{l}_{\mcl{P}_1}(\mY)-f^{l}_{\mcl{P}_2}(\mY) \right\|_F 
\leq   \sqrt{n_{\infty} m}   \tau_\infty \left| \lambda^{(2)}_l-\lambda^{(1)}_l \right| + 
       \sqrt{n_{\infty} m} \lambda_\infty  \left| \tau^{(2)}_l-\tau^{(1)}_l \right| 
       \nonumber \\
     & \quad       
     + 2\tau_\infty B_\infty \left \| f^{l-1}_{\mcl{P}_1}(\mY)  \right \|_F
    \left \|
              B_l(\vw_2^{(j(l))})  - B_l(\vw_1^{(j(l))}) 
    \right\|_{2 \to 2} 
    + B_\infty^2
    \left \| f^{l-1}_{\mcl{P}_1}(\mY)  \right \|_F 
    \left | \tau^{(1)}_l  - \tau^{(2)}_l  \right | \displaybreak[2]\\
   & \quad + 
     \alpha
        \left \|
        f^{l-1}_{\mcl{P}_2}(\mY)  - f^{l-1}_{\mcl{P}_1}(\mY)  
        \right\|_F 
    +
     B_\infty \|\mY \|_F  \left | \tau^{(1)}_l  - \tau^{(2)}_l  \right |
       +
        \tau_\infty \|\mY \|_F 
        \left \| 
        B_l(\vw_2^{(j(l))}) - \tau^{(2)}_l B_l(\vw_1^{(j(l))}) 
        \right \|_{2 \to 2}. \nonumber \displaybreak[2]\\
\end{align*}
Recall that by Lemma~\ref{lemma:bound_output_after_l_layers} we have, for $\ell = 1,\hdots,L$,
\begin{align}
\left \| f^{l}_{\mcl{P}_1}(\mY)  \right \|_F & \leq  
\|\mY \|_F \sum_{k=1}^l 
\left(
\tau_k \left \|  B_{k}(\vw_1^{(j(k))}) \right \|_{2 \to 2}
\prod_{i = k}^{l-1}
\left \| \mI_{n_{i}} -\tau_i B_{i+1}(\vw_1^{(j(i+1))})^\top  B_{i+1}(\vw_1^{(j(i+1))})\right\|_{2\to 2}
\right) \notag \\
& \leq \|\mY\|_F  
\tau_\infty B_\infty \sum_{k=1}^{l}  
    \left(
    \sup_{i = k, \dots, l-1} \left \| \mI_{n_{i}} - \tau_i B_{i+1}(\vw_2^{(j(i+1))})^\top  B_{i+1}(\vw_2^{(j(i+1))})\right\|_{2\to 2}
    \right)^{l-k} \notag\\
& \leq  \|\mY\|_F \tau_\infty B_\infty \sum_{k=1}^{l} \alpha^{l-k} 
=
 \|\mY\|_F Z_l,
 \label{eq:Zl-bound2}
\end{align}
with $\alpha$ as defined in \eqref{eq:alpha} and $Z_l$ as in \eqref{eq:Z_l}. 
This leads to the estimate
\begin{align*}
   &  \left\|f^{l}_{\mcl{P}_1}(\mY)-f^{l}_{\mcl{P}_2}(\mY) \right\|_F \\ 
& \leq      
        \alpha
        \left \|
            f^{l-1}_{\mcl{P}_2}(\mY)  - f^{l-1}_{\mcl{P}_1}(\mY)  
        \right\|_F  
         + \tau_\infty \|\mY\|_F \left( 1  + 2 B_\infty  Z_{l-1} \right)
        \left \| B_l(\vw_2^{(j(l))})  - B_l(\vw_1^{(j(l))}) \right\|_{2\to 2}\\
        & \quad  +
        \left(\lambda_\infty \sqrt{n_{\infty} m}  +  B_\infty \|\mY\|_F  (B_\infty Z_{l-1} + 1) \right) 
        \left|\tau^{(1)}_l   -\tau^{(2)}_l\right| 
       +  \tau_\infty \sqrt{n_{\infty} m}  \left| \lambda^{(2)}_l-\lambda^{(1)}_l \right|.
\end{align*}
%
%
%
Introducing the additional quantities
\begin{align*}
\beta_l     &   =  
\tau_\infty \|\mY\|_F \left( 1  + 2B_\infty  Z_{l-1} \right), \\
\kappa_l    &  = 
\left(
\lambda_\infty \sqrt{n_{\infty} m} +  B_\infty \|\mY\|_F  (B_\infty Z_{l-1} + 1) 
\right), \\
\varphi_l & = \tau_\infty \sqrt{n_{\infty} m},
\end{align*}
we can write our estimate more compactly as
\begin{align}
        &  \left\|f^{l}_{\mcl{P}_1}(\mY)-f^{l}_{\mcl{P}_2}(\mY) \right\|_F 
        \nonumber\\
\leq    &  
        \alpha
        \left\|f^{l-1}_{\mcl{P}_1}(\mY)-f^{l-1}_{\mcl{P}_2}(\mY) \right\|_F 
        +
        \beta_l
        \left \| B_l(\vw_2^{(j(l))})  - B_l(\vw_1^{(j(l))}) \right\|_{2\to 2} 
        \nonumber \\
       & \quad +
        \kappa_l
        \left|\tau^{(1)}_l   -\tau^{(2)}_l\right| 
         +
        \varphi_l \left| \lambda^{(2)}_l-\lambda^{(1)}_l \right|, \label{eq:uga}
\end{align}
Using our abbreviations, the general formulas for $K_L$, $M_L$ and $O_L$ for $L \geq 1$ are given by
\begin{equation}
K_L  = \sum_{l=1}^L \beta_l \alpha^{L-l}, 
\qquad
M_L  = \sum_{l=1}^L \kappa_l \alpha^{L-l}, 
\qquad
O_L  = \sum_{l=1}^L \varphi_l \alpha^{L-l},
\qquad L \geq 1,
\label{eq:K_L_M_L_O_L_in_proof}
\end{equation}
which indeed for $L \geq 2$ is just a compact notation for \eqref{eq:K_L_threshold},  \eqref{eq:M_L_threshold} and \eqref{eq:O_L_threshold} in Theorems 
\ref{theorem:main_result} and \ref{thm:perturbation_threshold}.
We now prove via induction that \eqref{eq:estimate_via_K_L_threshold} holds for any number of layers $L \in \N$ with $K_L$, $M_L$ and $O_L$ as just stated.
For $L=1$, we can directly obtain these factors from the following estimate.
Using similar arguments as above, we obtain (with $n_\infty = n_0$)
\begin{align*}
     & \left\|f^{1}_{\mcl{P}_1}(\mY)-f^{1}_{\mcl{P}_2}(\mY) \right\|_F \\
=    & \left \|
            P_1 S_{ \tau_1^{(1)}  \lambda^{(1)}_1} \left[ \tau_1^{(1)}  B_1(\vw_1^{(j(1))})^\top  \mY \right]  -
            P_1 S_{ \tau_1^{(2)}  \lambda^{(2)}_1} \left[ \tau_1^{(2)}   B_1(\vw_2^{(j(1))})^\top  \mY \right]
        \right\|_F \displaybreak[2]\\
\leq  & \left \|
            S_{ \tau_1^{(1)}  \lambda^{(1)}_1} \left[ \tau_1^{(1)}  B_1(\vw_1^{(j(1))})^\top  \mY \right]  -
            S_{ \tau_1^{(2)}  \lambda^{(2)}_1} \left[ \tau_1^{(2)}  B_1(\vw_2^{(j(1))})^\top \mY \right]
        \right\|_F \displaybreak[2]\\
\leq  & \left \|
            S_{ \tau_1^{(1)}  \lambda^{(1)}_1} \left[ \tau_1^{(1)}  B_1(\vw_1^{(j(1))})^\top  \mY \right]  -
            S_{ \tau_1^{(1)}  \lambda^{(1)}_1} \left[ \tau_1^{(2)}  B_1(\vw_1^{(j(1))})^\top  \mY \right]        \right\|_F \\ 
     & \qquad + \left \|
            S_{ \tau_1^{(1)}  \lambda^{(1)}_1} \left[ \tau_1^{(2)}  B_1(\vw_1^{(j(1))})^\top  \mY \right]  -
            S_{ \tau_1^{(2)}  \lambda^{(2)}_1} \left[ \tau_1^{(2)}  B_1(\vw_2^{(j(1))})^\top \mY \right]
        \right\|_F \displaybreak[2]\\         
=    & \|\mY\|_F 
       \left \|  
       \tau_1^{(1)} B_1(\vw_1^{(j(1))}) -\tau_1^{(2)} B_1(\vw_2^{(j(1))}) 
       \right\|_F  + 
       \sqrt{n_0 m} 
       \left| \tau_1^{(2)} \lambda_1^{(2)} - \tau_1^{(1)} \lambda_1^{(1)} \right|    \displaybreak[2]\\
\leq  &  B_\infty    \|\mY\|_F \left| \tau_1^{(1)} - \tau_1^{(2)} \right| +
         \tau_\infty \|\mY\|_F \left \|  B_1(\vw_1^{(j(1))}) - B_1(\vw_2^{(j(1))}) \right\|_F   \\
& \qquad + \tau_\infty \sqrt{n_0 m}   \left| \lambda^{(2)}_l-\lambda^{(1)}_l \right| 
         + \lambda_\infty  \sqrt{n_0 m}  \left| \tau^{(2)}_l-\tau^{(1)}_l \right| \displaybreak[2]\\
= & \tau_\infty \|\mY\|_F \left \|  B_1(\vw_1^{(j(1))}) - B_1(\vw_2^{(j(1))}) \right\|_F 
+(B_\infty \|\mY\|_F + \lambda_\infty \sqrt{n_{\infty} m}) \left| \tau_1^{(1)} - \tau_1^{(2)} \right|  
+ \tau_\infty \sqrt{n_{\infty} m} \cdot \left| \lambda_1^{(1)} - \lambda_1^{(2)} \right|,
\end{align*}
which 
by \eqref{eq:K_L_M_L_O_L_in_proof} 
gives \eqref{eq:estimate_via_K_L_threshold} for
$L=1$, since $\beta_1 = \tau_\infty \|\mY\|_F$ and
$\kappa_1 = \lambda_\infty \sqrt{n_{\infty} m} + B_\infty \|\mY\|_F$ 
(because $Z_0 = 0$) 
and $\varphi_1 =  \sqrt{n_{\infty} m}  \tau_\infty$.

Now we proceed with the induction step,
assuming that the claim holds for some $L \in \N$.
The estimate \eqref{eq:uga} used for the output after $L+1$ layers, combined with the induction hypothesis give us
\begin{align*}
        &  \left\|f^{L+1}_{\mcl{P}_1}(\mY)-f^{L+1}_{\mcl{P}_2}(\mY) \right\|_F \\
\leq    &  
        \alpha
        \left\|f^{L}_{\mcl{P}_1}(\mY)-f^{L}_{\mcl{P}_2}(\mY) \right\|_F 
      +
          \beta_{L+1}
        \left \| B_{L+1}(\vw_2^{(j(l))})  - B_{L+1}(\vw_1^{(j(l))}) \right\|_{2\to 2} \\
        & \quad +
        \kappa_{L+1}
        \left| \tau^{(1)}_{L+1} - \tau^{(2)}_{L+1} \right| 
         +
        \varphi_{L+1} \left| \lambda^{(2)}_{L+1} - \lambda^{(1)}_{L+1} \right| \displaybreak[2]\\
\leq 
    &   \left(\alpha \beta_L + \beta_{L+1} \right) 
        \max_{l \in [L]} 
        \left \| B_l(\vw_1^{(j(l))}) - B_l(\vw_2^{(j(l))}) \right\|_{2 \to 2}  \\
    &   \qquad 
    + \left(\alpha \kappa_L + \kappa_{L+1} \right)
        \left\|  \bm{\tau}^{(1)} -  \bm{\tau}^{(2)} \right \|_\infty  
    + \left( \alpha \varphi_L + \varphi_{L+1} \right)
        \cdot \left\|  \bm{\lambda}^{(1)} - \bm{\lambda}^{(2)} \right \|_\infty,
\end{align*}
so that \eqref{eq:estimate_via_K_L_threshold} holds with the claimed expression for $K_{L+1}$ and thus finishes the proof, since
\[
    K_{L+1} 
= \alpha K_L  +  \beta_{L+1} 
= \alpha \sum_{l=1}^L \beta_l \alpha^{L-l} +  \beta_{L+1} 
=  \sum_{l=1}^{L+1} \beta_l \alpha^{L+1-l}.
\]
and since similar expressions also hold for $O_{L+1}$ and $M_{L+1}$.
\end{proof}

\noindent 
Let us finally provide the Lipschitz bound for the full network in terms of the parameters.

\begin{corollary}\label{cor:Psi_sigma}
For two networks $h_{\mcl{P}_1}, h_{\mcl{P}_2} \in \hypspacelong$ we have
\begin{align*}
&  
\left\| 
h_{\mcl{P}_1} (\mY)) 
- h_{\mcl{P}_2} (\mY))
\right\|_F\\
&  \leq  
\left\|f^L_{\mcl{P}_1} ({\mY})\right\|_F \left\|B_{L+1}(\vw_1^{(j(L+1))}) - B_{L+1}(\vw_2^{(j(L+1))}) \right\| _{2 \to 2} +
\left\|f^L_{\mcl{P}_1} ({\mY}) - f_{\mcl{P}_2}^L(\mY) \right \|_F \\
&  \leq 
(B_\infty K_L + \|\mY \|_F Z_L) \cdot 
\max_{l \in [L+1]} 
\left \| B_l(\vw_1^{(j(l))}) - B_l(\vw_2^{(j(l))}) \right\|_{2 \to 2} \\
& \quad + M_L  \cdot \left\|  \bm{\tau}^{(1)} -  \bm{\tau}^{(2)} \right \|_\infty 
+ O_L  \cdot \left\|  \bm{\lambda}^{(1)} - \bm{\lambda}^{(2)} \right \|_\infty
\end{align*}
with $K_L$, $M_L$ and $O_L$ as given in \eqref{eq:K_L_threshold},
\eqref{eq:M_L_threshold} and \eqref{eq:O_L_threshold}.   
\end{corollary}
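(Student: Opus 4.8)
The plan is to peel off the two operations that sit on top of $f^L_{\mcl P}$ one at a time, reducing the whole statement to an invocation of Theorem~\ref{thm:perturbation_threshold} together with the output bound of Lemma~\ref{lemma:bound_output_after_l_layers}. Recall from \eqref{eq:def_hypothesis} that $h_{\mcl P}=g_{L+1}\circ f^L_{\mcl P}=\sigma\circ B_{L+1}(\vw^{(j(L+1))})\circ f^L_{\mcl P}$. First I would discard $\sigma$: since $\sigma$ is $1$-Lipschitz in the $\ell_2$-norm by \eqref{ABHRES:sigma:bound} and acts columnwise, it is $1$-Lipschitz with respect to the Frobenius norm on the stacked matrices, so
\[
\left\|h_{\mcl P_1}(\mY)-h_{\mcl P_2}(\mY)\right\|_F
\leq
\left\|B_{L+1}(\vw_1^{(j(L+1))})\,f^L_{\mcl P_1}(\mY)-B_{L+1}(\vw_2^{(j(L+1))})\,f^L_{\mcl P_2}(\mY)\right\|_F .
\]

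Next I would handle the last linear layer by a telescoping (add-and-subtract) step, inserting the mixed term $B_{L+1}(\vw_2^{(j(L+1))})f^L_{\mcl P_1}(\mY)$ and applying the triangle inequality together with submultiplicativity of the operator/Frobenius norms. This splits the right-hand side into a term $\bigl\|f^L_{\mcl P_1}(\mY)\bigr\|_F\,\bigl\|B_{L+1}(\vw_1^{(j(L+1))})-B_{L+1}(\vw_2^{(j(L+1))})\bigr\|_{2\to2}$, measuring the sensitivity in the final weight, plus a term controlled by $\bigl\|f^L_{\mcl P_1}(\mY)-f^L_{\mcl P_2}(\mY)\bigr\|_F$ carrying the operator norm of $B_{L+1}(\vw_2^{(j(L+1))})$, which is at most $B_\infty$ by \eqref{eq:standing_assumptions}. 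This is exactly the first displayed inequality of the corollary.

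To reach the second inequality I would substitute the two ingredients already proved. For the first term I bound $\bigl\|f^L_{\mcl P_1}(\mY)\bigr\|_F\leq\|\mY\|_F Z_L$ by the chain of estimates in \eqref{eq:Zl-bound2} (Lemma~\ref{lemma:bound_output_after_l_layers}), and I enlarge the single difference $\|B_{L+1}(\vw_1)-B_{L+1}(\vw_2)\|_{2\to2}$ to $\max_{l\in[L+1]}\|B_l(\vw_1^{(j(l))})-B_l(\vw_2^{(j(l))})\|_{2\to2}$. For the second term I feed $\bigl\|f^L_{\mcl P_1}(\mY)-f^L_{\mcl P_2}(\mY)\bigr\|_F$ into Theorem~\ref{thm:perturbation_threshold}, which expands it into $K_L\max_{l\in[L]}\|\cdots\|_{2\to2}+M_L\|\bm\tau^{(1)}-\bm\tau^{(2)}\|_\infty+O_L\|\bm\lambda^{(1)}-\bm\lambda^{(2)}\|_\infty$. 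Collecting the two $B_l$-difference contributions under the common maximum over $l\in[L+1]$ (using $\max_{l\in[L]}\leq\max_{l\in[L+1]}$) then produces the coefficient $B_\infty K_L+\|\mY\|_F Z_L$ in front of that maximum, while the stepsize and threshold terms are inherited directly, giving the asserted bound.

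The work here is essentially bookkeeping, since the two genuinely hard estimates — the output norm bound and the perturbation bound of Theorem~\ref{thm:perturbation_threshold} — are already available. The only point demanding care is the consistent tracking of the operator norm $\|B_{L+1}(\vw_2^{(j(L+1))})\|_{2\to2}$ introduced by the last layer and its interaction with the coefficients from Theorem~\ref{thm:perturbation_threshold}, so that the $B_\infty$ factor lands on $K_L$ exactly as displayed; this is where I would be most careful, together with justifying the Frobenius-level $1$-Lipschitzness of the columnwise map $\sigma$.
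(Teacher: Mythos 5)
Your proposal is correct and follows essentially the same route as the paper's proof: discard $\sigma$ via its columnwise $1$-Lipschitzness, add and subtract the mixed term $B_{L+1}(\vw_2^{(j(L+1))})f^L_{\mcl{P}_1}(\mY)$, bound $\|B_{L+1}(\vw_2^{(j(L+1))})\|_{2\to2}$ by $B_\infty$, and then plug in Theorem~\ref{thm:perturbation_threshold} together with $\|f^L_{\mcl{P}_1}(\mY)\|_F \leq \|\mY\|_F Z_L$ from \eqref{eq:Zl-bound2}. You also correctly track the $B_\infty$ factor on the second term, which the printed first inequality of the corollary drops (a typo in the statement) but which the paper's proof and the final coefficient $B_\infty K_L + \|\mY\|_F Z_L$ both require.
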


\begin{proof}
Using that $\sigma$ is $1$-Lipschitz, and applying the triangle inequality, we obtain
\begin{align*}
&   \left\| 
h_{\mcl{P}_1} (\mY)) 
- h_{\mcl{P}_2} (\mY))
\right\|_F =
\left \|
    \sigma \left(B_{L+1}(\vw_1^{(j(L+1))}) f^L_{\mcl{P}_1} (\mY) \right) - 
    \sigma \left( B_{L+1}(\vw_2^{(j(L+1))}) f_{\mcl{P}_2}^L (\mY) \right) 
    \right \|_F \\
&  \leq  
\left \|B_{L+1}(\vw_1^{(j(L+1))}) f^L_{\mcl{P}_1} ({\mY}) - B_{L+1}(\vw_2^{(j(L+1))}) f^L_{\mcl{P}_2} ({\mY})
 \right \|_F \\
&  \leq  
\left\|B_{L+1}(\vw_1^{(j(L+1))}) f^L_{\mcl{P}_1} ({\mY})-B_{L+1}(\vw_2^{(j(L+1))}) f^L_{\mcl{P}_1} ({\mY})\right\|_F \\
& +
\left\|B_{L+1}(\vw_2^{(j(L+1))}) f^L_{\mcl{P}_1} ({\mY})-
B_{L+1}(\vw_2^{(j(L+1))}) f_{\mcl{P}_2}^L(\mY) \right \|_F  \\ 
&  \leq  
\left\|f^L_{\mcl{P}_1} ({\mY})\right\|_F \left\|B_{L+1}(\vw_1^{(j(L+1))}) - B_{L+1}(\vw_2^{(j(L+1))}) \right\| _{2 \to 2} +
B_\infty
\left\|f^L_{\mcl{P}_1} ({\mY}) - f_{\mcl{P}_2}^L(\mY) \right \|_F, \end{align*}
where we used that $\|B_{L+1}(\vw_1^{(j(L+1))}) \|_{2 \to 2} \leq B_\infty$
by definition of $B_\infty$, see \eqref{eq:standing_assumptions}.
Using the bound \eqref{eq:estimate_via_K_L_threshold} in
Theorem~\ref{thm:perturbation_threshold} for 
$\left\|f^L_{\mcl{P}_1} ({\mY}) - f_{\mcl{P}_2}^L(\mY) \right \|_F$ and that
$
\left\|f^L_{\mcl{P}_1} ({\mY})\right\|_F \leq \|\mY\|_F Z_L$ by \eqref{eq:Zl-bound2}
yields the claimed estimate.
\end{proof}
\subsection{Covering number estimates and proof of the main result}
\label{subsec:covering_nb_est_thresholds}

First, let us already state the  following classical lemma. 
The proof can be found in various sources; as a reference,  
see \cite[Proposition C.3]{foucart_mathematical_2013}.

\begin{lemma}\label{lemma_covering_numbers}
Let $ \varepsilon > 0$ and let $\| \cdot \|$ be a norm on a 
$n$-dimensional vector space $\mcl{V}$. 
Then, for any subset 
$\mcl{U} \subseteq B_\mcl{V} := \{x \in \mcl{V}: \|x\| \leq 1\}$ contained in the unit ball of $\mcl{V}$, we have
\[
\mathcal{N} \left( \mcl{U}, \|\cdot\|, \varepsilon \right) \leq \left(1 + \frac{2}{\varepsilon} \right)^n.
\]
\end{lemma}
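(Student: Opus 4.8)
The plan is to prove this classical estimate by a \emph{maximal packing} (volumetric) argument, which simultaneously produces a cover and bounds its cardinality. First I would choose a maximal $\varepsilon$-separated subset $\{x_1, \dots, x_M\} \subseteq \mcl{U}$, i.e.\ a maximal collection of points whose pairwise distances exceed $\varepsilon$. By maximality this set is automatically an $\varepsilon$-net for $\mcl{U}$: any $x \in \mcl{U}$ lies within distance $\varepsilon$ of some $x_i$, for otherwise $x$ could be adjoined to the collection, contradicting maximality. Hence the balls of radius $\varepsilon$ centred at the $x_i$ cover $\mcl{U}$, and since the centres lie in $\mcl{U}$ this gives $\mathcal{N}(\mcl{U}, \|\cdot\|, \varepsilon) \leq M$; it remains to bound $M$.

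To bound $M$ I would compare volumes. Fixing a linear isomorphism $\mcl{V} \cong \R^n$ and transporting Lebesgue measure, write $\mathrm{vol}$ for the resulting translation-invariant measure, which obeys the scaling law $\mathrm{vol}(r\,\mcl{B}) = r^n\,\mathrm{vol}(\mcl{B})$ for the norm ball $\mcl{B} = B_\mcl{V}$ and $r > 0$. Since the $x_i$ are $\varepsilon$-separated, the balls $x_i + \tfrac{\varepsilon}{2}\mcl{B}$ are pairwise disjoint (a common point $y$ of two of them would force $\|x_i - x_j\| \leq \tfrac{\varepsilon}{2} + \tfrac{\varepsilon}{2} = \varepsilon$). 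Moreover each such ball is contained in $(1 + \tfrac{\varepsilon}{2})\mcl{B}$, because $\|x_i\| \leq 1$ (as $x_i \in \mcl{U} \subseteq B_\mcl{V}$), so any $y \in x_i + \tfrac{\varepsilon}{2}\mcl{B}$ satisfies $\|y\| \leq \|x_i\| + \tfrac{\varepsilon}{2} \leq 1 + \tfrac{\varepsilon}{2}$. Summing the volumes of the disjoint balls and invoking the scaling law yields
\[
M \left(\frac{\varepsilon}{2}\right)^n \mathrm{vol}(\mcl{B}) \leq \left(1 + \frac{\varepsilon}{2}\right)^n \mathrm{vol}(\mcl{B}),
\]
and dividing by $(\varepsilon/2)^n \,\mathrm{vol}(\mcl{B}) > 0$ gives $M \leq \big((1 + \varepsilon/2)/(\varepsilon/2)\big)^n = (1 + 2/\varepsilon)^n$, which is exactly the claimed bound.

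The only genuinely delicate point is assigning a meaningful notion of volume on the abstract $n$-dimensional space $\mcl{V}$ and checking that the argument is independent of this choice. This is handled by noting that $B_\mcl{V}$ is a bounded convex set with nonempty interior, hence has finite positive Lebesgue measure under any linear identification $\mcl{V} \cong \R^n$; since the final estimate involves only the \emph{ratio} of the volumes of two dilates of the \emph{same} set $\mcl{B}$, the factor $\mathrm{vol}(\mcl{B})$ cancels and the particular normalisation of the measure is irrelevant. The remaining steps are elementary, and $M$ is finite because $\mcl{U}$ is bounded while the separated balls have positive volume inside the fixed bounded region $(1+\tfrac{\varepsilon}{2})\mcl{B}$.
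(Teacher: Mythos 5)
Your proposal is correct and takes essentially the same approach as the paper: the paper does not prove this lemma itself but defers to \cite[Proposition C.3]{foucart_mathematical_2013}, whose proof is precisely your maximal-packing/volume-comparison argument. All the delicate points --- maximality of the $\varepsilon$-separated set yielding the net property, disjointness of the $\tfrac{\varepsilon}{2}$-balls, their inclusion in $\left(1+\tfrac{\varepsilon}{2}\right)B_{\mcl{V}}$, and the cancellation of the (normalization-dependent) volume of $B_{\mcl{V}}$ --- are handled correctly.
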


\noindent
The next statement on the covering numbers of product spaces is probably well-known.
We provide its proof for convenience.
\begin{lemma}[Covering Numbers of Product Spaces]\label{lemma:covering_number_product_space}
Consider $p$ metric spaces $(\mcl{S}_1, d_1), \dots, (\mcl{S}_p, d_p)$, and positive numbers
$c_1, \dots, c_p$.
We define the product space $\mcl{S}$, equipped with the metric $d$ by
\begin{equation}
    \mcl{S} = (\mcl{S}_1 \times \dots \times \mcl{S}_p, d),
    \qquad
    d(x,y) = \sum_{k=1}^p c_k d_k(x_k,y_k),
\end{equation}
where $x = (x_1, \dots, x_p), y = (y_1, \dots, y_p) \in \mcl{S}$.
Then, we have the covering number estimate
\begin{equation}
     \cover\left(\mcl{S}, d, \varepsilon \right)   
\leq \prod_{k=1}^p   
     \cover\left(\mcl{S}_k, d_k, \varepsilon/(c_k \cdot p) \right).
\end{equation}
\end{lemma}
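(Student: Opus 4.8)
The plan is to construct an explicit $\varepsilon$-net of the product space by taking the Cartesian product of good nets of the individual factors, with the per-factor accuracy chosen so that the weighted sum in the definition of $d$ collapses to exactly $\varepsilon$. Concretely, for each $k \in [p]$ I would fix a minimal net $\mcl{C}_k \subseteq \mcl{S}_k$ that covers $\mcl{S}_k$ at level $\varepsilon/(c_k \cdot p)$ with respect to the metric $d_k$, so that by definition $|\mcl{C}_k| = \cover(\mcl{S}_k, d_k, \varepsilon/(c_k \cdot p))$. The candidate net for the product is then $\mcl{C} = \mcl{C}_1 \times \dots \times \mcl{C}_p$, whose cardinality is exactly $\prod_{k=1}^p |\mcl{C}_k|$, the right-hand side of the claimed bound.

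The key step is to verify that $\mcl{C}$ is indeed an $\varepsilon$-net of $(\mcl{S}, d)$. Given an arbitrary point $x = (x_1, \dots, x_p) \in \mcl{S}$, the covering property of each $\mcl{C}_k$ supplies a center $y_k \in \mcl{C}_k$ with $d_k(x_k, y_k) \leq \varepsilon/(c_k \cdot p)$. Setting $y = (y_1, \dots, y_p) \in \mcl{C}$ and inserting these coordinatewise estimates into the definition of the product metric, the factors $c_k$ cancel in each summand, giving $d(x,y) = \sum_{k=1}^p c_k\, d_k(x_k, y_k) \leq \sum_{k=1}^p c_k \cdot \varepsilon/(c_k \cdot p) = \varepsilon$. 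Hence every point of $\mcl{S}$ lies within distance $\varepsilon$ of a point of $\mcl{C}$, so $\mcl{C}$ is a valid $\varepsilon$-net.

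Since $\cover(\mcl{S}, d, \varepsilon)$ is by definition the size of a smallest $\varepsilon$-net, it is bounded above by $|\mcl{C}| = \prod_{k=1}^p \cover(\mcl{S}_k, d_k, \varepsilon/(c_k \cdot p))$, which is the assertion. I do not expect any genuine obstacle in this proof; the only point demanding attention is the bookkeeping in the choice of the per-coordinate radius $\varepsilon/(c_k \cdot p)$, which is reverse-engineered precisely so that the weighted triangle-type sum telescopes to $\varepsilon$. A naive uniform radius would leave the weights $c_k$ and the number of factors $p$ unaccounted for, so the role of these two parameters in the net levels is the one detail I would state carefully.
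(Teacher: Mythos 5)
Your proposal is correct and coincides with the paper's own proof: both take minimal $\varepsilon/(c_k \cdot p)$-nets of the factors, form their Cartesian product, and verify the covering property by the same weighted-sum computation $\sum_{k=1}^p c_k \cdot \varepsilon/(c_k \cdot p) = \varepsilon$. No gaps; nothing further to compare.
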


\begin{proof}
Suppose that, for any $k \in [p]$, we have individual coverings of $\mcl{S}_k$ at level 
$\varepsilon/(c_k p)$ of cardinality $\cover\left(\mcl{S}_k, d_k , \varepsilon/(c_k p) \right)$. We claim that the product of all these $\varepsilon/(c_k p)$-nets is an $\varepsilon$-net for the product space $S$.
Indeed let $x = (x_1, \dots, x_p) \in \mcl{S}$, i.e. $x_k \in \mcl{S}_k$. 
Then, for each $x_k \in \mcl{S}_k$, there exists some element $y_k$ in the 
$\varepsilon/(c_k \cdot p)$-net of $\mcl{S}_k$,
i.e., $d_k(x_k,y_k) \leq \varepsilon/(c_k \cdot p)$. 
Then, $y = (y_1, \dots, y_p)$ is an element of the product of all nets, and by the definition of the metric $d$ it holds 
$d(x,y) \leq c_1(\varepsilon/(c_1 \cdot p)) + \dots + c_p(\varepsilon/(c_p \cdot p) = \varepsilon$.
\end{proof}


In order to bound the integral arising from Dudley's inequality, we will need the following estimate refining \cite[Lemma C.9]{foucart_mathematical_2013}, which is to crude for $\beta$ close to zero below.
\begin{lemma}\label{lem:int-estimate} For $\alpha, \beta > 0$ 
and the function $\Psi$ being defined in \eqref{eq:function_Psi},
it holds
\begin{equation}
\int_0^\alpha \sqrt{\log \left(1+\frac{\beta}{t}\right) } \d t
\leq \alpha \Psi(\beta/\alpha),
\label{eq:integral_inequality_II}
\end{equation}
where
\[
\Psi(t) := \sqrt{\log(1+t) + t(\log(1+t) - \log(t))}.
\]
The function $\Psi$ satisfies $\lim_{t \to 0} \Psi(t) = 0$ and $\Psi(t) \leq \sqrt{\log(e(1+t))}$ for all $t \in \R$.
\end{lemma}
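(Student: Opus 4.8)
The plan is to bound the integral by first passing to the squared integrand via Jensen's inequality, then evaluating the resulting elementary integral in closed form and recognizing that the answer is exactly $\Psi(\beta/\alpha)^2$. First I would note that $\sqrt{\log(1+\beta/t)}$ is integrable on $(0,\alpha]$ (near $t=0$ it behaves like $\sqrt{-\log t}$), so the left-hand side is finite. Since $x \mapsto \sqrt{x}$ is concave, Jensen's inequality applied to the probability measure $\tfrac{1}{\alpha}\,\d t$ on $[0,\alpha]$ gives
\[
\frac{1}{\alpha}\int_0^\alpha \sqrt{\log\left(1+\frac{\beta}{t}\right)}\,\d t
\leq
\sqrt{\frac{1}{\alpha}\int_0^\alpha \log\left(1+\frac{\beta}{t}\right)\,\d t}\,,
\]
so it suffices to evaluate $I := \int_0^\alpha \log(1+\beta/t)\,\d t$.

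Writing $\log(1+\beta/t) = \log(t+\beta) - \log t$ and using the antiderivatives $\int \log(t+\beta)\,\d t = (t+\beta)\log(t+\beta) - (t+\beta)$ and $\int \log t\,\d t = t\log t - t$ (the lower boundary being handled by $\lim_{t\to 0^+} t\log t = 0$), I would obtain
\[
I = (\alpha+\beta)\log(\alpha+\beta) - \alpha\log\alpha - \beta\log\beta .
\]
Dividing by $\alpha$ and substituting $t = \beta/\alpha$, the $\log\alpha$ contributions cancel and one is left with $I/\alpha = (1+t)\log(1+t) - t\log t = \log(1+t) + t(\log(1+t) - \log t) = \Psi(t)^2$. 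Combining this identity with the Jensen bound yields \eqref{eq:integral_inequality_II}.

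The two stated properties of $\Psi$ then follow easily. For the limit, each of the terms $\log(1+t)$, $t\log(1+t)$ and $-t\log t$ in $\Psi(t)^2$ tends to $0$ as $t \to 0^+$, so $\Psi(t) \to 0$. For the upper bound I would rewrite $\Psi(t)^2 = \log(1+t) + t\log(1+1/t)$ and apply the elementary inequality $\log(1+x) \leq x$ with $x = 1/t$, giving $t\log(1+1/t) \leq 1$; hence $\Psi(t)^2 \leq 1 + \log(1+t) = \log(e(1+t))$, and taking square roots completes the claim (the value $t=0$ being trivial since $\Psi(0)=0$). I do not expect any real obstacle here: the only points requiring care are the convergence of the improper integral at the origin and the vanishing boundary term $t\log t \to 0$, both routine. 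The one genuinely useful observation is that the closed form of the averaged logarithmic integral coincides precisely with $\Psi^2$, which is exactly what motivates the definition of $\Psi$ in \eqref{eq:function_Psi}.
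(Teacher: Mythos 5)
Your proposal is correct and follows essentially the same route as the paper: reduce to the squared integrand (your Jensen step is interchangeable with the paper's Cauchy--Schwarz step), evaluate $\int_0^\alpha \log(1+\beta/t)\,\d t$ in closed form as $\alpha\Psi(\beta/\alpha)^2$, and bound $t(\log(1+t)-\log t)\leq 1$ for the final estimate. The only cosmetic differences are that you compute the integral via direct antiderivatives rather than the paper's substitution plus integration by parts, and you use $\log(1+x)\leq x$ where the paper invokes the mean value theorem -- both yield identical conclusions.
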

Note that by setting $\Psi(0) = 0$ the above estimates is trivially true also for $\beta=0$.

\begin{proof} 
We proceed similarly to the proof of \cite[Lemma C.9]{foucart_mathematical_2013} and first apply the Cauchy-Schwarz inequality to obtain
\[
\int_0^\alpha \sqrt{\log(1+\beta t^{-1})} \d t \leq \sqrt{ \int_0^\alpha 1 \d t \cdot \int_0^\alpha \log\left(1+ \beta t^{-1}\right) \d t} 
\]
For the second integral on the right hand side above we apply a change of variable and integration by parts to obtain
\begin{align*}
&\int_0^\alpha \log(1+\beta t^{-1}) \d t = \beta \int_{\beta/\alpha}^\infty u^{-2} \log(1+u) \d u\\
&= \beta \left.-u^{-1} \log(1+u) \right|_{\beta/\alpha}^\infty + \beta \int_{\beta\alpha}^\infty u^{-1} \frac{1}{1+u}\d u
= \alpha \log(1+\beta/\alpha) + \beta \lim_{z \to \infty} \left[ \int_{\beta/\alpha}^z \frac{1}{u} \d u - \int_{\beta/\alpha}^z \frac{1}{1+u} \d u \right]\\
& = \alpha \log(1+\beta/\alpha) + \beta\left(\log(1+\beta/\alpha) - \log(\beta/\alpha)\right)
= \alpha \Psi(\beta/\alpha)
\end{align*}
by definition of $\Psi$. A combination
with the inequality derived shows inequality
\eqref{eq:integral_inequality_II}.
Since $\lim_{t \to 0} t \log(t) = 0$ it follows
easily that $\lim_{t \to 0} \Psi(t) = 0$.
Moreover, by the mean-value theorem, for some $\xi \in [t, 1+t]$,
\[
\log(1+t) - \log(t) = \frac{1}{\xi} \leq \frac{1}{t}.
\]
Hence, $\Psi(t) \leq \sqrt{\log(1+t) + 1} = \sqrt{\log(e(1+t))}$.
\end{proof}

Finally, we are prepared for the proof of our main result.

\begin{proof}[Proof of Theorem \ref{theorem:main_result}]
By the assumption \eqref{eq:B_l_Lipschitz} that $B_l$ is $D_l$-Lipschitz,
putting $D_\infty \defeq  \max_{l = 1,\dots, L} D_l$, see \eqref{eq:definition_D_l} for the definition of $D_l$, 
Corollary \ref{cor:Psi_sigma} 
implies that

\begin{align*}
     & \left\| h_{\mcl{P}_1} (\mY) - h_{\mcl{P}_2} (\mY) \right\|_F  \\
\leq & (B_\infty K_L + \|\mY \|_F Z_L) \cdot \max_{l \in [L+1]} 
        \left \| B_l(\vw_1^{(j(l))}) - B_l(\vw_2^{(j(l))}) \right\|_{2 \to 2} \\
 \quad & + M_L  \cdot \left\|  \bm{\tau}^{(1)} -  \bm{\tau}^{(2)} \right \|_\infty 
+ O_L  \cdot \left\|  \bm{\lambda}^{(1)} - \bm{\lambda}^{(2)} \right \|_\infty\\
\leq &  
        (B_\infty K_L + \|\mY \|_F Z_L) \cdot D_\infty \cdot \left\| \mW_1 -  \mW_2 \right \|_\mcl{X} 
      + M_L  \cdot \left\|  \bm{\tau}^{(1)} -  \bm{\tau}^{(2)} \right \|_\infty 
       + O_L  \cdot \left\|  \bm{\lambda}^{(1)} - \bm{\lambda}^{(2)} \right \|_\infty 
\end{align*}

Recalling that $Q_L = (B_\infty K_L + \|\mY \|_F Z_L) \cdot D_\infty$, see \eqref{eq:Q_L_threshold}, we equip $\mcl{Y}=\mcl{T} \times \Lambda \times \mcl{W}$ with the following norm
 \begin{equation}
\left\| ( \bm{\tau}, \bm{\lambda}, \mW) \right\|_\mcl{Y}
\defeq 
M_L  \| \bm{\tau}\|_\infty + 
O_L  \| \bm{\lambda}\|_\infty + Q_L \| \bm{\mW}\|_\mcl{X}, 
\quad (\bm{\tau}, \bm{\lambda}, \mW) \in \mcl{Y}
\label{eq:norm_Y}
\end{equation}
where $\|\cdot\|_\mcl{X}$ was defined in \eqref{eq:max_norm_X}.
Recall from \eqref{thresholds-inclusion}
that $\mathcal{T} \subset \bm{\tau_0} + r_1B_{\|\cdot\|_\infty}^L$ and 
$\Lambda \subset \bm{\lambda}_0 + r_2B_{\|\cdot\|_\infty}^L$,
while $\mcl{W} \subset W_\infty B_{\mcl{X}}^K$ by
\eqref{eq:standing_assumptions}.
Using that covering numbers with respect to norms are invariant under translations of the set,
Lemma~\ref{lemma:covering_number_product_space} 
gives
\begin{align*}
     &  \cover\left( \mcl{M}, \|\, \cdot \,\|_F, \varepsilon \right) 
\leq   \cover  
        \left(
            \mcl{T} \times \Lambda \times \mcl{W}, \|\, \cdot \,\|_\mcl{Y}, \varepsilon 
        \right) \\
\leq &  \cover  
        \left(           r_1B_{\|\cdot\|_\infty}^L,
            \|\, \cdot \,\|_\infty, \varepsilon/(4 \cdot M_L)
        \right) 
        \cdot
        \cover  
        \left(         r_2B_{\|\cdot\|_\infty}^L,
            \|\, \cdot \,\|_\infty, \varepsilon / (4\cdot O_L)
        \right) 
        \cdot
        \cover  
        \left(
               W_\infty B^{K}_\mcl{X},
            \|\, \cdot \,\|_\mcl{X}, \varepsilon /(4\cdot Q_L)
        \right) \\
\leq &  
\left( 1 + \frac{8 r_2 O_L}{\varepsilon} \right)^{L}
\left( 1 + \frac{8 r_1 M_L}{\varepsilon} \right)^{L}
\left( 1 + \frac{8 W_\infty Q_L}{\varepsilon} \right)^{K}
\end{align*}

Already preparing its application in Dudley's integral, let us apply the logarithm to obtain
\begin{align}
& \log\left(\cover\left( \mcl{M}, \|\, \cdot \,\|_F, \varepsilon \right)\right)
 \nonumber\\
\leq &
K \log \left( 1 + \frac{8 W_\infty Q_L}{\varepsilon} \right) +
L \log \left( 1 + \frac{8 r_2 O_L}{\varepsilon} \right) +
L \log \left( 1 + \frac{8 r_1 M_L}{\varepsilon} \right)
\label{eq:covering_numbers_m_threshold}
\end{align}
Plugging the covering number estimate \eqref{eq:covering_numbers_m_threshold} into Dudley's integral (see \eqref{rademacher_2_rewritten_threshold} and \eqref{eq:dudley_bound}) gives
\begin{align*}
         &   \E \sup_{\mM \in \mcl{M}} \frac{1}{\dimS} 
            \sum_{i=1}^\dimS \sum_{k=1}^{n_L} \varepsilon_{ik} M_{ik} 
\leq       \frac{4\sqrt 2}{\dimS}\int_0^{\sqrt{\dimS}\Bout/2}  
            \sqrt{\log \cover(\mcl{M}, \|\cdot\|_F, \varepsilon)} \d\epsilon \\
\leq    &   \frac{4\sqrt{2K}}{\dimS}\int_0^{\sqrt{m}\Bout/2}  
            \sqrt{\log \left( 1 + \frac{8 W_\infty Q_L }{\varepsilon} \right)}
            \d\epsilon 
  +  \frac{4\sqrt{2L}}{\dimS}\int_0^{\sqrt{m}\Bout/2}   
            \sqrt{\log \left( 1 + \frac{8 r_2 O_L}{\varepsilon} \right)}
            \d\epsilon \\         
&\quad     +\frac{4\sqrt{2L}}{\dimS}\int_0^{\sqrt{m}\Bout/2}  
            \sqrt{ \log \left( 1 + \frac{8 r_1M_L}{\varepsilon} \right)}
            \d\epsilon  \\
\leq    &  2\sqrt{2}\Bout
\left[\sqrt{\frac{K}{m}}\Psi\left(\frac{16 W_\infty Q_L}{\sqrt{m}\Bout}\right)
+ \sqrt{\frac{L}{m}}\Psi\left(\frac{8r_2 O_L}{\sqrt{m}\Bout}\right)
+ \sqrt{\frac{L}{m}} \Psi\left(\frac{8r_1 M_L}{\sqrt{m} \Bout} \right)
\right].
\end{align*}
where we applied Lemma~\ref{lem:int-estimate} in the last step.
The theorem is obtained using Theorem \ref{thm:ge_vs_rademacher} 
and Lemma \ref{lem:Maurer}, our application of \eqref{rademacher_2}.
\end{proof}

\section{Numerical Results}
\label{sec:numerical}

In this section, we report on numerical experiments performed to examine the obtained generalization bounds in comparison with the actual generalization error. 
Note that we do not necessarily aim to achieve state of the art results in terms of reconstruction, but instead we pursue two main goals in this section. 


On the one hand, we would like to illustrate that the proposed framework is meaningful by capturing various interesting examples of practical interest.
On the other hand, we are interested in the generalization error and its scaling with respect to the training parameters. Specifically, we have obtained a sample complexity bound that holds uniformly over the hypothesis space and for any distribution. Although the bound is quite simple and general, it is interesting to see if we can expect improvements when it is applied to data from low complexity distributions. ISTA is used mainly in sparse coding and recovery, and therefore we consider such a scenario.

The experiments are run over a synthetic and the MNIST data set \cite{lecun1998mnist} using Pytorch implementation and Titan XP GPU. In all the experiments, we have used the Adam optimizer for training the network with the learning rate $10^{-2}$. The objective function for optimization is the MSE loss of the recovered vector with respect to the ground truth. For the synthetic data, we use the training data with size 10000 and the test date with size 50000. The model is trained for 10 epochs. We explain the experiment details here. First, the measurement matrix is chosen as a random Gaussian matrix. Soft thresholding algorithms require constraints on the spectral norm of the measurement matrix for convergence. We guarantee this by proper normalization. 
For the synthetic data set, we have chosen the classical compressed sensing setup with sparse vectors. We run our experiments over different input and output dimensions and sparsity order. We choose a random orthogonal matrix as the ground truth dictionary. We initialize the model with a random matrix.  
To generate sparse vectors, the support is chosen uniformly at random. The non-zero values are drawn from the standard normal distribution. We repeat the experiments for the synthetic data between 10 to 100 times to obtain a smoother curve after averaging. 

For the first experiment, we enforce an (almost) orthogonality constraint on the learned dictionary by adding a regularization term  $\|\mI - \mPhi^\top \mPhi\|_{2 \to 2}$ to the loss function. See \cite{lezcano2019cheap} for a different approach using exponential mapping from the skew-symmetric matrices onto the special orthogonal group $SO(N)$.



\subsection{MNIST experiment}
In 
Figures~\ref{fig:abserror_MNIST} and \ref{fig:generror_MNIST}, we present the results for the MNIST dataset. It can be seen that  the original images can be recovered with only a small number of layers. The error in the MNIST experiments is the pixel-based error normalized by the image dimension. The trained model is compared with the iterative soft thresholding algorithm (ISTA)  using a similar structure but with 5000 iterations. 
\begin{figure}[th]
    \centering
    \begin{subfigure}[b]{0.47\textwidth}
\includegraphics[width=\textwidth]{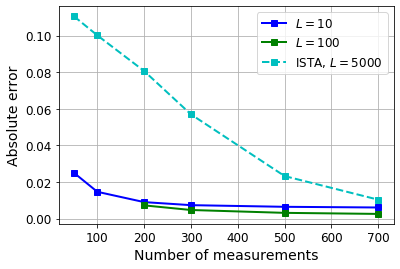}
        \caption{Absolute reconstruction error for different measurements of MNIST}
        \label{fig:abserror_MNIST}
    \end{subfigure}
\centering
\hfill
    \begin{subfigure}[b]{0.47\textwidth}
\includegraphics[width=\textwidth]{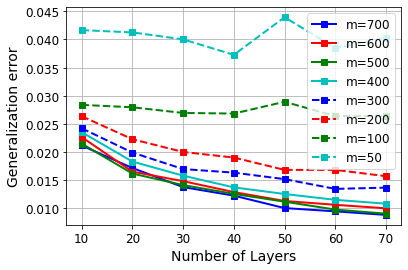}
        \caption{GE for different measurements of MNIST}
        \label{fig:generror_MNIST}
    \end{subfigure}
\caption{MNIST dataset}
\label{fig:mnist_dataset}
\end{figure}

The generalization error is plotted in Figure \ref{fig:generror_MNIST}. First of all, it can be seen that the generalization error decreases with increasing the number of measurements. A similar observation is made in the experiments on the synthetic data. Moreover, increasing the number of layers decreases the generalization error. 

While our theoretical bound actually increases with increasing number of layers (and slightly increases with increasing number of measurements \cite{behboodi2020generalization}, although that dependence is swallowed
by the constant in \eqref{eq:rough_GE_main_example}), the better behavior obtained here, may be justified from a compressive sensing standpoint. The reconstruction task becomes easier with 
more measurements and the quality becomes better with more iterations.
Additional assumptions that may take the specific compressive sensing
scenario into account are currently not captured by our general worst case result Theorem~\ref{theorem:main_result}, which provides a uniform complexity bound that applies to all possible input distributions.
We conjecture that the bound of Theorem~\ref{theorem:main_result} can
be improved by taking into account assumptions like
to sparsity of the input and properties of the measurement matrix $\mA$ and the  underlying true dictionary $\Phi_0$ such as a restricted isometry property of $\mA \Phi_0$. But presently, it is not clear how this could potentially be done.



\subsection{Experiment on synthetic data}


In Figure \ref{fig:generror_synth_n}, the generalization error is plotted for a variation of parameter choices. The input dimension is fixed to $N=120$. We have used a linear fit between the data points with different numbers of layers. 
Similar to the MNIST experiments, the generalization error decreases with the number of measurements $n$. 
Increasing the number of layers, however, increases the generalization error. 
This is now in accordance with our theoretical results and suggests that
the logarithmic scaling in $L$ may not be removed in general.

\begin{figure}[th]
    \centering
\includegraphics[width=\textwidth]{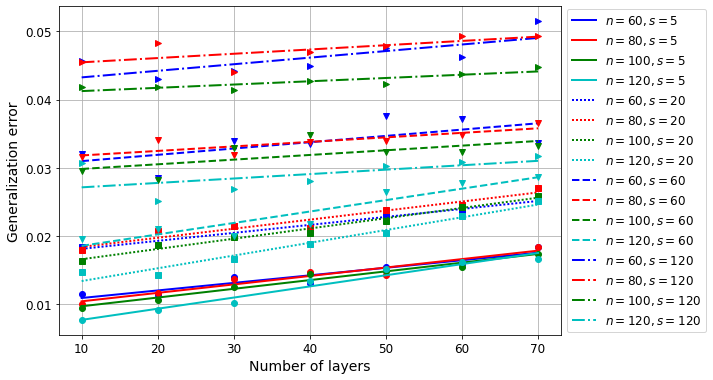}
        \caption{Generalization error for $N=120$}
        \label{fig:generror_synth_n}
\end{figure}

\subsection{Synthetic dataset: variations in the architecture}
In this section, we repeat previous experiments with small changes in the architecture to see the correlation with our generalization bound. In many cases, our bound correlates well with the generalization error, although, as we mentioned above, we expect improvements by considering the input structure, for instance sparsity, in our analysis. The scenario of learning thresholds and using convolutional dictionaries shows bigger discrepancy with our generalization bound.

\subsubsection{Non-orthogonal dictionaries} 

We first consider the case where the dictionaries are chosen to be an arbitrary matrix and not necessarily orthogonal. 
A similar case has already been studied in \cite{kouni2021admm}, where a neural network jointly learns a decoder for analysis-sparsity-based compressive sensing, and an overcomplete sparsifying dictionary. Their framework is tested on image and speech datasets.
Here, in order to evaluate how close our theoretical bounds are to reality, Figure~\ref{fig:ge_nonorth}
plots the empirically observed generalization error versus 
our theoretical generalization bound. We clearly observe that
our bounds are generally positively correlated with the empirical generalization error.  Indeed, the generalization error increases with the number of layers and with $N$. The other dependencies are less clear, since their effect is mixed with other terms in the generalization bound. We have chosen a sparsity $s=10$ for these experiments and plotted the generalization bound from Theorem~\ref{theorem:main_result}.

\begin{figure}
    \centering
\includegraphics[width=\textwidth]{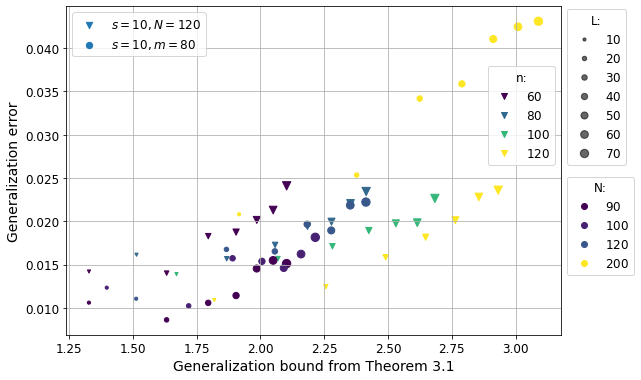}
        \caption{Generalization error vs generalization bound comparison for non-orthogonal dictionary}
        \label{fig:ge_nonorth}
\end{figure}

\subsubsection{Learned thresholds}
In Figure \ref{fig:ge_learnedth}, we show the result for the case where the thresholds are learned, and the dictionaries are not orthogonal. 
It is difficult to spot a general trend in the the empirical generalization error as a function of depth and input dimension. The same conclusion holds for convolutional dictionaries, as we will see later.  
However, it is also important to notice that in both cases, the generalization error is significantly smaller compared to all other scenarios. This means that learning thresholds and using convolutional dictionaries yield the best generalization error among other cases.  
One can see that the error tends to grow with the network dimensions (depth, input dimension), but there are many outliers, and the curves do not smooth out even after many iterations. 
\begin{figure}
    \centering
\includegraphics[width=\textwidth]{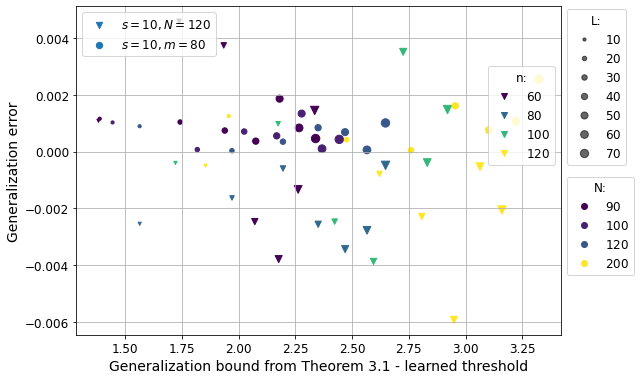}
        \caption{Generalization error vs generalization bound comparison for learned thresholds}
        \label{fig:ge_learnedth}
\end{figure}

\subsubsection{Alternating dictionaries}
We repeat the analysis for the case where two different dictionaries $\mPhi_1$ and $\mPhi_2$ are used in the model. For odd layers, the first dictionary is used and for the even layers  the second dictionary. In other words, we have:
\begin{align*}
f_{2l-1} \left( \vz, \vy \right)
&=
S_{\tau \lambda}
\left[ \left(\mI_{N} - 
    \tau (\mA\mPhi_1)^\top  (\mA\mPhi_1) 
    \right )\vz 
     + \tau  (\mA\mPhi_1)^\top \vy \right]\\
     f_{2l} \left( \vz, \vy \right),
&=
S_{\tau \lambda}
\left[ \left(\mI_{N} - 
    \tau (\mA\mPhi_2)^\top  (\mA\mPhi_2) 
    \right )\vz 
     + \tau  (\mA\mPhi_2)^\top \vy \right].
\end{align*}
In this scenario, the number of learnable parameters are doubled. The training process is done similarly to previous cases. As it can be seen in Figure \ref{fig:ge_multidict}, our generalization bound tends to be positively correlated with the true generalization error. A general trend in this figure is that the generalization error increases with the dimensions $N$ and $L$ as darker and smaller points tend toward the left corner. Note that the generalization error does not show any strong correlation with the number $\dimS$ of measurements. 
\begin{figure}
    \centering
\includegraphics[width=\textwidth]{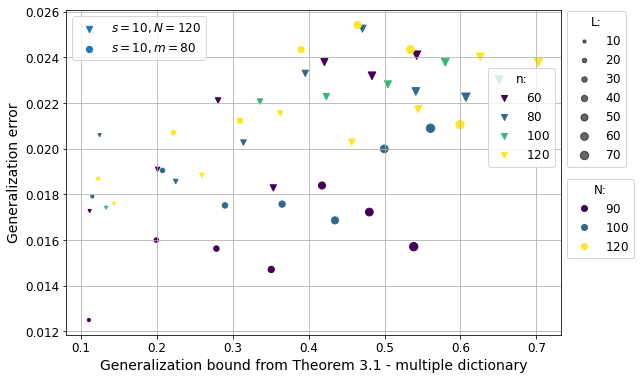}
        \caption{Generalization error vs generalization bound comparison for alternating dictionaries}
        \label{fig:ge_multidict}
\end{figure}

\subsubsection{Convolutional and alternating dictionaries}
In this part, we use convolutional dictionaries. The convolutional dictionary is shared between all the layers. The effective number of parameters in this network would be the size of the convolutional kernel, which is chosen equal to $7$. The numerical results are presented in Figure \ref{fig:ge_conv}. Similar to the learned threshold case, there is definite trend between the generalization error and network dimensions, although it can be seen that there is an increasing trend of the error with depth and input dimension. Note that the empirical generalization error is very small in this case.

\begin{figure}
    \centering
\includegraphics[width=\textwidth]{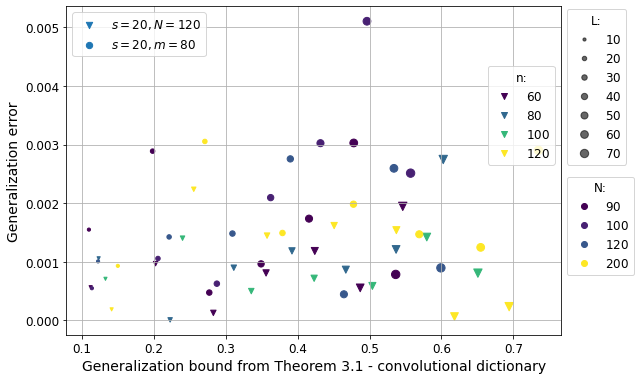}
        \caption{Generalization error vs generalization bound comparison for convolutional dictionaries}
        \label{fig:ge_conv}
\end{figure}
\newpage
\bibliographystyle{imaiai}
\bibliography{references}

\end{document}